\algrenewcommand\algorithmicindent{0.7em}
\algnewcommand\algorithmicinput{\textbf{Input:}}
\algnewcommand\algorithmicoutput{\textbf{Output:}}
\algnewcommand\Input{\item[\algorithmicinput]}
\algnewcommand\Output{\item[\algorithmicoutput]}
\algrenewcommand{\algorithmiccomment}[1]{\hfill\small{\textcolor{darkgray}{\(\triangleright\) #1}}}
\definecolor{easy_blue}{RGB}{0,98,163}%
\definecolor{easy_red}{RGB}{150,60,0}
\definecolor{easy_blue_bg}{RGB}{234,241,248}
\definecolor{easy_red_bg}{RGB}{253,246,235}
\definecolor{blue_bg}{RGB}{0,61,86}
\DeclareRobustCommand{\hlblue}[1]{{\sethlcolor{easy_blue_bg}\hl{#1}}}
\DeclareRobustCommand{\hlred}[1]{{\sethlcolor{easy_red_bg}\hl{#1}}}
\crefname{equation}{}{}
\Crefname{assumption}{Assumption}{Assumptions}
\NewDocumentCommand{\G}{e{_^}}{%
\mathop{}\!%
\nabla{}
\IfValueT{#1}{_{\!#1}}%
\IfValueT{#2}{^{#2}}%
}
\newcommand{\blank}{{\mspace{2mu}\cdot\mspace{2mu}}}
\newcommand{\argmax}{\operatorname*{arg\,max}}
\newcommand{\argmin}{\operatorname*{arg\,min}}
\newcommand*{\tr}{^{\mkern-1.5mu\mathsf{T}}}%
\newcommand*{\hc}{^{\mathsf{H}}}%
\newcommand{\E}{\operatorname*{\mathbb{E}}}
\newcommand{\V}{\operatorname*{\mathbb{V}}}
\newcommand{\C}{\operatorname*{\mathbb{C}}}
\newcommand{\ind}{\operatorname*{\perp\!\!\!\perp}}
\newcommand{\iid}{\text{\tiny{iid}}}
\newcommand{\RP}{\overline{P}}
\newcommand{\RF}{\overline{F}}
\newcommand{\RX}{\overline{X}}
\newcommand{\RM}{\overline{M}}
\newcommand{\RA}{\overline{A}}
\newcommand{\RPi}{\overline{Π}}
\newcommand{\RC}{\overline{C}}
\newcommand{\TO}{\mathbin{‖}}
\newcommand{\DynSB}{(\mathrm{S_{dyn}})}
\newcommand{\StaSB}{(\mathrm{S_{sta}})}
\newcommand{\DynHSB}{(\mathrm{H_{dyn}})}
\newcommand{\StaHSB}{(\mathrm{H_{sta}})}
\begin{document}

\title{Diffusion Bridge Mixture Transports, Schrödinger Bridge Problems and Generative Modeling}

\author{\name{Stefano Peluchetti} \email{speluchetti@cogent.co.jp}\\
\addr{Cogent Labs\\
106--0032, Tokyo, Japan.}}

\editor{Aapo Hyvärinen}

\maketitle

\begin{abstract}%
The dynamic Schrödinger bridge problem seeks a stochastic process that defines a transport between two target probability measures, while optimally satisfying the criteria of being closest, in terms of Kullback-Leibler divergence, to a reference process.
We propose a novel sampling-based iterative algorithm, the iterated diffusion bridge mixture (IDBM) procedure, aimed at solving the dynamic Schrödinger bridge problem.
The IDBM procedure exhibits the attractive property of realizing a valid transport between the target probability measures at each iteration.
We perform an initial theoretical investigation of the IDBM procedure, establishing its convergence properties.
The theoretical findings are complemented by numerical experiments illustrating the competitive performance of the IDBM procedure.
Recent advancements in generative modeling employ the time-reversal of a diffusion process to define a generative process that approximately transports a simple distribution to the data distribution.
As an alternative, we propose utilizing the first iteration of the IDBM procedure as an approximation-free method for realizing this transport.
This approach offers greater flexibility in selecting the generative process dynamics and exhibits accelerated training and superior sample quality over larger discretization intervals.
In terms of implementation, the necessary modifications are minimally intrusive, being limited to the training loss definition.
\end{abstract}

\begin{keywords}
measure transport; coupling; Schrödinger bridge; iterative proportional fitting; diffusion process; stochastic differential equation; score-matching; generative modeling.
\end{keywords}

\section{Introduction}

Generating samples from a given distribution is a central topic in computational statistics and machine learning.
In recent years, transportation of probability measures has gained considerable attention as a computational tool for sampling.
The underlying concept entails generating samples from a complex distribution by first obtaining samples from a simpler distribution and subsequently computing a suitably constructed map of these samples \citep{elmoselhy2012bayesian,marzouk2016sampling}.
Let us consider the following definition: given two probability measures $Γ$ and $ϒ$, a transport from $Γ$ to $ϒ$ is defined as a map $h(X,ε)$, where $ε$ is an independent auxiliary random element, such that if $X ∼ Γ$ (i.e., $X$ is distributed according to $Γ$), then $Y = h(X,ε)$ satisfies $Y ∼ ϒ$.
This broad definition encompasses both deterministic and random transports and includes a wide range of sampling methods, such as Markov Chain Monte Carlo \citep{brooks2011handbook}, normalizing flows \citep{papamakarios2021normalizing}, and score-based generative modeling \citep{ho2020denoising,song2021scorebased}.

To narrow down the extensive class of possible transports, we can identify the optimal transport according to a suitably chosen criterion.
This paper addresses a more generalized version of the optimization problem initially posed by Schrödinger in 1931 \citep{leonard2014survey}.
In this context, both $Γ$ and $ϒ$ are defined on the same $d$-dimensional Euclidean space.
The optimization space comprises $d$-dimensional diffusion processes on the time interval $[0,τ]$, constrained to have initial distribution $Γ$ and terminal distribution $ϒ$.
Optimality is achieved by minimizing the Kullback-Leibler divergence to a reference diffusion process.
The optimal map $Y = h(X,ε)$ is thus obtained by defining $ε$ as the $d$-dimensional Brownian motion on $[0,τ]$ which drives the optimal diffusion process started at $X$, with terminal value $Y$.
This seemingly intricate procedure of defining a transport is of practical relevance.
When the reference diffusion is given by a scaled Brownian motion $σ W_t$, for some $σ > 0$, $(X,h(X,ε))$ solves the Euclidean entropy-regularized optimal transport (EOT) problem \citep[Chapter 4]{peyre2020computational} for the regularization level $2σ^2$.
EOT provides a more tractable alternative to solving the optimal transport (OT) problem.
However, in high-dimensional settings, solving the EOT problem remains challenging.
The dynamic formulation, achieved by inflating the original space where $Γ$ and $ϒ$ are supported so that they correspond to the initial-terminal distributions of a stochastic process, is essential to recent computational advancements in these demanding settings.
Specifically, \citet{bortoli2021diffusion,vargas2021solving} rely on a time-reversal result for diffusion processes to implement sampling-based iterative algorithms aimed at solving the dynamic Schrödinger bridge problem.

\subsection{Our Contributions}

In this work, we begin by reviewing the theory of Schrödinger bridge problems in their various forms, as well as the approaches developed by \citet{bortoli2021diffusion,vargas2021solving}.
We establish that sampling-based time-reversal approaches suffer from a simulation-inference mismatch.
Specifically, at each iteration, samples from the previous iteration are utilized to infer a new diffusion process, but relevant regions of the state space can be left unexplored.
This issue is particularly prominent in scenarios characterized by a low level of randomness in the reference process.
This is problematic when the goal is to solve the OT problem: for the EOT solution to closely approximate the OT solution, the scale $σ$ of the reference diffusion $σ W_t$ must be small.

The primary contribution of this work is the development of a novel sampling-based iterative algorithm, the IDBM procedure, for addressing the dynamic Schrödinger bridge problem.
We begin by noting that the problem's solution, i.e.\ the optimal diffusion process, can be represented as a mixture of diffusion bridges, with the mixing occurring over the bridges' endpoints.
In general, processes constructed in this manner are not diffusion processes \citep{jamison1974reciprocal,jamison1975markov}.
The IDBM procedure involves the following iterative steps:
(i) constructing a stochastic process as a mixture of diffusion bridges such that its initial-terminal distribution forms a coupling of the target probability measures $Γ$ and $ϒ$;
(ii) matching the marginal distributions of the stochastic process generated in (i) with a diffusion process;
(iii) updating the coupling from step (i) with the initial-terminal distributions of the diffusion obtained in step (ii).
The diffusion process of step (ii) is inferred based on samples from the stochastic process of step (i).
Crucially, as both processes share the same marginal distributions, no simulation-inference mismatch occurs.
In this study, we conduct an initial theoretical examination of the IDBM procedure, establishing its convergence properties toward the solution to the dynamic Schrödinger bridge problem.
Additionally, we carry out empirical evaluations of the IDBM procedure in comparison to the IPF procedure, highlighting its robustness.

The recent advancements in score-based generative approaches \citep{song2021scorebased,rombach2022highresolution} have demonstrated remarkable generative quality in visual applications by leveraging the time-reversal of a reference diffusion process to define the sampling process.
To elaborate further, the reference diffusion's dynamics are selected based on a decoupling criterion, which requires the process's terminal value to be approximately independent of the process's initial value.
The reference process's initial distribution is assumed to be the empirical distribution of a dataset of interest, or a slightly perturbed version of it.
The time reversal of the reference diffusion, complemented by a dataset-independent initial distribution, defines the sampling process.
In this work, we propose to utilize the first iteration of the IDBM procedure as an alternative method of defining a sampling process targeting a dataset of interest.
Under this proposal, the reference diffusion's dynamics are no longer constrained by decoupling considerations.
We conduct an empirical evaluation of the resulting transport, demonstrating its competitiveness for visual applications in comparison to the approach of \citet{song2021scorebased}.
Remarkably, the proposed approach exhibits accelerated training and superior sample quality at larger discretization intervals without fine-tuning the involved hyperparameters.
From a practitioner standpoint, implementing the proposed method requires minimal changes, which are confined to the training loss definition.

\subsection{Structure of the Paper}

The paper is structured as follows.
In \cref{sec:schrodiger_bridges} we introduce the Schrödinger bridge problem in its various forms, the IPF procedure, its time-reversal sampling-based implementations for the case of a reference diffusion process, and score-based generative modeling.
The IDBM procedure, its convergence properties, and suitable training objectives are presented in \cref{sec:dbm}.
In \cref{sec:sde_class} we define the class of reference stochastic differential equations (SDE) considered in the applications, which are carried out in \cref{sec:applications}.
\cref{sec:related_work} discusses relevant connections and \cref{sec:discussion} provides concluding remarks for the paper.
Proofs, auxiliary formulae and visualizations, code listings and a table summarizing the notation used throughout this work are deferred to the Appendices.

\subsection{Notation and Preliminaries}

In light of the methodological focus of the present work, we refrain from discussing some of the more technical aspects.
The excellent treatises by \citet{leonard2014survey,leonard2014properties} provide a formal account of the preliminaries discussed below and of the developments of \cref{sec:schrodiger_problems}.
To aid the reader, a summary of the main notation is provided in \cref{tab:symbols} (\cref{app:notation}).

All the stochastic processes we are concerned with are defined over the continuous time interval $[0,τ]$, have state space $ℝ^d$, and are continuous, for some constants $τ > 0$ and $d ≥ 1$.
A realization of a stochastic process, or path, is an element $x ∈ 𝒞([0,τ],ℝ^d)$, the space of continuous functions from $[0,τ]$ to $ℝ^d$.

We denote each probability measure (PM, or distribution, law) with an uppercase letter.
For a PM $P$, $X ∼ P$ denotes that $X$ is a random element with distribution $P$.
Let $n ≥ 1$ and $t_1,…,t_n ∈ [0,τ]$.
We denote the collection of PMs over $𝒞([0,τ],ℝ^d)$ with $𝒫_𝒞$ and the collection of PMs over $ℝ^{d × n}$ with $𝒫_n$.
When $P ∈ 𝒫_n$ admits a density with respect to the Lebesgue measure on $ℝ^{d × n}$ we denote this density with the same lowercase letter (here $p$).
For $P ∈ 𝒫_𝒞$, we denote with $P_{t_1,…,t_n} ∈ 𝒫_n$ the marginalization of $P$ at $t_1,…,t_n$ and with $P_{•|t_1,…,t_n} ∈ 𝒫_𝒞$ the conditioning of $P$ given the values at times $t_1,…,t_n$.
More explicitly, if $X ∼ P ∈ 𝒫_𝒞$, we have $P_{t_1,…,t_n}(A_1,…,A_n) = \Pr[X_{t_1} ∈ A_1, …, X_{t_n} ∈ A_n]$ and $P_{•|t_1,…,t_n}(A|x_1,…,x_n) = \Pr[X ∈ A|X_{t_1} = x_1, …, X_{t_n} = x_n]$.
A path PM can be defined via a mixture: if $Ψ ∈ 𝒫_n$ and $P ∈ 𝒫_𝒞$, let $Q = Ψ P_{•|t_1,…,t_n}$ stand for $Q(A) = ∫P_{•|t_1,…,t_n}(A|x_1,…,x_n)Ψ(dx_1,…,dx_n)$.
That is, $Q$ is obtained by pinning down $X ∼ P$ at times $t_1,…,t_n$, and taking a mixture with respect to $Ψ$ over the pinned down values.
With this notation, the marginal-conditional decomposition of $P ∈ 𝒫_𝒞$  is $P = P_{t_1,…,t_n}P_{•|t_1,…,t_n}$.
If $P,Q ∈ 𝒫_𝒞$, and $P$ is absolutely continuous with respect to $Q$ ($P ≪ Q$) with density $(dP/dQ)(x)$, the marginal-conditional decomposition of $(dP/dQ)(x)$ is $(dP/dQ)(x) = (dP_{t_1,…,t_n}/dQ_{t_1,…,t_n})(x_{t_1},…,x_{t_n}) × (dP_{•|t_1,…,t_n}/dQ_{•|t_1,…,t_n})(x)$.

Given two PMs $Γ,ϒ ∈ 𝒫_1$, we define $𝒫_𝒞(Γ,ϒ) ⊆ 𝒫_𝒞$ as the collection of PMs having $Γ$ as initial distribution and $ϒ$ as terminal distribution.
Hence, $P ∈ 𝒫_𝒞(Γ,ϒ)$ transports $Γ$ to $ϒ$ ($P_0 = Γ, P_τ = ϒ$), while its time reversal transports $ϒ$ to $Γ$.
We write $𝒫_𝒞(Γ,□)$ when only the initial distribution is fixed, and $𝒫_𝒞(□,ϒ)$ when only the terminal distribution is fixed.
We define $𝒫_2(Γ,ϒ) ⊆ 𝒫_2$ as the collection of PMs with prescribed marginal distributions $Γ$ and $ϒ$, i.e.\ the collection of couplings between $Γ$ and $ϒ$.
We write $𝒫_2(Γ,□)$ ($𝒫_2(□,ϒ)$) when only the first (second) marginal distribution is fixed.
For $P ∈ 𝒫_2$, we write $P_0,P_τ$ to denote respectively the first and second marginal, and denote the conditional distributions with $P_{•|0}$ and $P_{•|τ}$.
With this notation, for $P ∈ 𝒫_𝒞 ∪ 𝒫_2$ we have $P = P_0 P_{•|0} = P_τ P_{•|τ}$.

We write $𝒰(0,τ)$ for the uniform distribution on $(0,τ)$, $𝒩_d(μ,Σ)$ for the $d$-variate normal distribution with mean $μ ∈ ℝ^d$ and covariance $Σ ∈ ℝ^{d×d}$.
We write $X_i \overset{\iid}{∼} P$ to denote that each random elements $X_i$ is independent and identically distributed according to a PM $P$.
For $S,R$ two PMs on the same space, the Kullback-Leibler (KL) divergence from $S$ to $R$ is defined as $D_{KL}(S \TO R) ≔ 𝔼_S[\log(\frac{dS}{dR})]$ whenever $S ≪ R$, $+∞$ otherwise.

We denote identity matrices with $I$, and transposition of a square matrix $A$ with $Aᵀ$.
We use the notation $[A]_i$ and $[A]_{i,j}$ to denote indexing respectively of a vector and of a matrix.
The Euclidean norm of a vector $V$ is denoted by $‖V‖$.

If $X ∼ P ∈ 𝒫_𝒞$, we denote the time reversal of $X$ with $\RX$, that is $\RX_t ≔ X_{τ - t}$ ($t ∈ [0,τ]$).
The time reversal of $P$ itself is defined as $\RP[A] ≔ P[\RA]$, $\RA ≔ \{x: x_{τ-t} ∈ A\}$, so that $\RX ∼ \RP$.
Note that $E_P[f(X)] = E_{\RP}[f(\RX)]$ for any integrable $f: 𝒞([0,τ],ℝ^d) → ℝ$.
Thorough the paper we set $𝔯 ≔ τ - t$.

Without further mention, all SDE drift coefficients are assumed to be $ℝ^d$-valued and all SDE diffusion coefficients are assumed to be $ℝ^{d×d}$-valued.
Most SDEs, and associated diffusion solutions, are denoted with the same letter $X$.
To avoid ambiguity, we clearly denote the corresponding probability laws.
Similarly, we denote most $ℝ^d$-valued standard Brownian motions with $W$.
It is understood that all Brownian motions driving different SDEs are independent nonetheless.

\section{Schrödinger Bridge Problems}\label{sec:schrodiger_bridges}

In \cref{sec:schrodiger_problems} we review the theory of Schrödinger bridge problems in both dynamic and static formulations.
We initially consider the generic setting of continuous stochastic processes before specializing to the case of a diffusion reference measures.
In \cref{sec:ipf,sec:ipf_diffusion_approach}, we discuss the classical iterative algorithm employed in the numerical solution to Schrödinger bridge problems, focusing on recent contributions that rely on a time reversal argument for diffusion processes.
Throughout our discussion, we adopt a path space, or continuous-time, perspective.
This perspective not only provides clarity but also highlights the connection with a sequential estimation problem for diffusion processes.
For the sake of completeness, we review the basics of simulation and inference for SDEs in \cref{sec:inference_simulation}.
Finally, in \cref{sec:score_generative}, review score-based generative modeling, emphasizing its connections to the dynamic Schrödinger bridge problem.

\subsection{Dynamic, Static and Half Versions}\label{sec:schrodiger_problems}

For $Γ,ϒ ∈ 𝒫_1$ and $R ∈ 𝒫_𝒞$, the solution to the dynamic Schrödinger bridge problem $\DynSB$ is given by
\begin{equation*}
S^*(Γ,ϒ,R,𝒫_𝒞) ≔ \argmin_{S ∈ 𝒫_𝒞(Γ,ϒ)}D_{KL}(S \TO R).
\end{equation*}
$\DynSB$ seeks a stochastic process transporting an initial distribution $Γ$ to a terminal distribution $ϒ$, while achieving minimum Kullback-Leibler (KL) divergence to the reference law $R$.
For a given $S ∈ 𝒫_𝒞$, $S ≪ R$, the marginal-conditional decompositions for PMs and densities give
\begin{align}
D_{KL}(S \TO R) & = 𝔼_{S_{0,τ}}\Big[𝔼_{S_{•|0,τ}}\Big[\log\Big(\frac{dS_{0,τ}}{dR_{0,τ}}(x_0,x_τ)\frac{dS_{•|0,τ}}{dR_{•|0,τ}}(x)\Big)\Big]\Big] \notag \\
                & = D_{KL}(S_{0,τ} \TO R_{0,τ}) + 𝔼_{S_{0,τ}}[D_{KL}(S_{•|0,τ} \TO R_{•|0,τ})].\label{eq:kl_decomp_full}
\end{align}
The second term of \cref{eq:kl_decomp_full} is minimized by $S^*_{•|0,τ}(Γ,ϒ,R,𝒫_𝒞)=R_{•|0,τ}$, independently of $Γ,ϒ$, yielding the representation of $S^*(Γ,ϒ,R,𝒫_𝒞)$ as a mixture of the reference process pinned down at its initial and terminal values.
The mixing distribution $S^*_{0,τ}(Γ,ϒ,R,𝒫_𝒞)$ over the initial and terminal values minimizes the first term of \cref{eq:kl_decomp_full}, i.e.\ it solves a specific instance of the following problem.

For $Γ,ϒ ∈ 𝒫_1$ and $B ∈ 𝒫_2$, the solution to the static Schrödinger bridge problem $\StaSB$ is given by
\begin{equation*}
S^*(Γ,ϒ,B,𝒫_2) ≔ \argmin_{C ∈ 𝒫_2(Γ,ϒ)}D_{KL}(C \TO B).
\end{equation*}
$\StaSB$ differs from $\DynSB$ in the optimization space: $𝒫_2$ instead of $𝒫_𝒞$.
With this notation, $S^*_{0,τ}(Γ,ϒ,R,𝒫_𝒞) = S^*(Γ,ϒ,R_{0,τ},𝒫_2)$, the solution to $\StaSB$ for $B=R_{0,τ}$.
The solution to $\DynSB$ is then $S^*(Γ,ϒ,R,𝒫_𝒞) = S^*(Γ,ϒ,R_{0,τ},𝒫_2)R_{•|0,τ}$.
Vice versa, if $S^*(Γ,ϒ,R,𝒫_𝒞)$ solves $\DynSB$, then $S_{0,τ}^*(Γ,ϒ,R,𝒫_𝒞)$ solves $\StaSB$ for $B = R_{0,τ}$.
From this point onward, $\StaSB$ is always understood to be associated to the corresponding $\DynSB$, i.e.\ we will only consider $\StaSB$ for $B = R_{0,τ}$.
It established in \citet{ruschendorf1993note} that, if there is a $C ∈ 𝒫_2(Γ,ϒ)$ such that $D_{KL}(C \TO R_{0,τ}) < ∞$, then $\StaSB$ admits a solution, which is unique.
This hypothesis is assumed to hold thorough the paper.

We introduce an additional axis among which to classify Schrödinger bridge problems, covering both dynamic $\DynHSB$ and static $\StaHSB$ variants jointly for conciseness.
Thus, let $𝒫$ be either $𝒫_𝒞$ (case of $\DynHSB$) or $𝒫_2$ (case of $\StaHSB$).
For $Γ,ϒ ∈ 𝒫_1$, $Q ∈ 𝒫$, the solutions to the forward and to the backward Schrödinger half bridge problems are respectively given by
\begin{equation*}\begin{aligned}
S^*(Γ,□,Q,𝒫) & ≔ \argmin_{H ∈ 𝒫(Γ,□)}D_{KL}(H \TO Q), \\
S^*(□,ϒ,Q,𝒫) & ≔ \argmin_{H ∈ 𝒫(□,ϒ)}D_{KL}(H \TO Q).
\end{aligned}\end{equation*}
In contrast to $\DynSB$ and $\StaSB$, only one of the marginal conditions is enforced.
The forward and backward half bridge problems admit simpler solutions which form the basis of the developments of \cref{sec:ipf}.
Proceeding as in \cref{eq:kl_decomp_full}, it can be established that for $H ∈ 𝒫$ such that $H ≪ Q$
\begin{equation}\label{eq:kl_decomp_half}\begin{aligned}
D_{KL}(H \TO Q) & = D_{KL}(H_0 \TO Q_{0}) + 𝔼_{H_0}[D_{KL}(H_{•|0} \TO Q_{•|0})]  \\
                & = D_{KL}(H_τ \TO Q_{τ}) + 𝔼_{H_τ}[D_{KL}(H_{•|τ} \TO Q_{•|τ})],
\end{aligned}\end{equation}
from which we obtain the half bridge solutions $S^*(Γ,□,Q,𝒫) = Γ Q_{•|0}$ and $S^*(□,ϒ,Q,𝒫) = ϒ Q_{•|τ}$.
That is, one of the endpoint (or marginal, for $\StaHSB$) distributions of $Q = Q_0 Q_{•|0} = Q_τ Q_{•|τ}$ is replaced by one of $Γ,ϒ$ while keeping the associated conditional distribution of $Q$ constant.
In the case of $\DynHSB$, it thus suffices to propagate the initial (terminal) distribution $Γ$ ($ϒ$) through the forward (backward) dynamics of $Q$: $Q_{•|0}$ ($Q_{•|τ}$).

$R_0$ plays a minor role in $\DynSB$.
From \cref{eq:kl_decomp_half} applied with $Q = R$, $H ∈ 𝒫_𝒞$, either $R_0$ is such that $\DynSB$ do not admit any solution (when $D_{KL}(S \TO R) = +∞$ for every $S ∈ 𝒫_𝒞(Γ,ϒ)$), or the solution to $\DynSB$ exists, is unique, and is independent of $R_0$.
This work is concerned with the case where $R$ is given by the solution to a $d$-dimensional SDE, i.e.\ a diffusion process \citep{oksendal2003stochastic,friedman1975stochastic}.
We consider the following reference SDE, with initial distribution $Γ$ (without loss of generality),
\begin{equation}\label{eq:sde_ref}\begin{aligned}
 & dX_t = μ_R(X_t,t)dt + σ_R(X_t,t)dW_t,\quad t ∈ [0,τ], \\
 & X_0 ∼ Γ.
\end{aligned}\end{equation}
Thorough the paper, it is assumed that \cref{eq:sde_ref} admits a unique (weak, non-explosive) solution, whose law defines the reference law $R = Γ R_{•|0}$.
It is also assumed that $R ≪ R^{\circ}$ where $R^{\circ}$ is the law of the unique solution to \cref{eq:sde_ref} with $μ_R(x,t) = 0$, and that $Σ_R(x,t) ≔ σ_R(x,t)σ_R(x,t)ᵀ$ is invertible.
These are mild assumptions.
The law $R_{•|0}$ only depends on the drift coefficient $f(x,t)$ and on the diffusion coefficient $σ_R(x,t)$.

In concluding this section, it is noteworthy that solving $\StaSB$ is equivalent to solving a corresponding EOT problem.
We seek a solution $S^*(Γ,ϒ,R_{0,τ},𝒫_2)$, and for the sake of simplicity we examine the case where $Γ,ϒ,R_{0,τ}$ and $C ∈ 𝒫_2(Γ,ϒ)$ all admits densities with respect to Lebesgue measures.
Therefore,
\begin{equation*}
S^*(Γ,ϒ,R_{0,τ},𝒫_2) = \argmin_{C ∈ 𝒫_2(Γ,ϒ)} 𝔼_C[-\log r_{τ|0}(X_τ|X_0)] + H(C).
\end{equation*}
Here $H(C) ≔ 𝔼_C[\log c(X_0,X_τ)]$ is the entropy of $C$.
Given the same assumption, we consider a cost function $κ(x,y): ℝ^d × ℝ^d → ℝ$ and two target marginal distributions $Γ,ϒ$ of interest.
The solution to the EOT problem with regularization $ε > 0$ is then defined as
\begin{equation}\label{eq:eot}
EOT^*(Γ,ϒ,κ,ε) ≔ \argmin_{C ∈ 𝒫_2(Γ,ϒ)} 𝔼_C[κ(X_0,X_τ)] + εH(C).
\end{equation}
By judiciously selecting the reference measure $R$, it becomes feasible to solve a corresponding EOT problem of interest.
For example, if $R$ is associated to $dX_t = σdW_t$ for a scalar $σ > 0$, then $S^*(Γ,ϒ,R_{0,τ},𝒫_2) = EOT^*(Γ,ϒ,‖x - y‖^2,2σ^2)$.
Consequently, $S^*(Γ,ϒ,R_{0,τ},𝒫_2)$ is the solution to the Euclidean EOT problem for regularization $ε = 2σ^2$.
The arguments presented here carry over to the more generic measure-theoretic setting \citep[Remark 4.2]{peyre2020computational}.

\subsection{Iterative Proportional Fitting (IPF)}\label{sec:ipf}

The main tool employed in the numerical solution to $\DynSB$ and $\StaSB$ is an iterative procedure, known in the literature under a variety of names.
In $\StaSB$, when $Γ$ and $ϒ$ admit Lebesgue densities $γ(□)$ and $υ(□)$, the term \emph{Fortet iterations} \citep{fortet1940resolution} is used.
Instead, when $Γ$ and $ϒ$ concentrate all mass on a finite set of values, the term \emph{iterative proportional fitting procedure} \citep{deming1940least} is used.
In the same setting, due to the equivalence between $\StaSB$ and EOT, the same procedure is also known as \emph{Sinkhorn algorithm}.
Finally, the term IPF can be used in the context of the measure-theoretic formulation of \citet{ruschendorf1995convergence}, including the dynamic version $\DynSB$ \citep{bernton2019schrodinger}.
In this work, we utilize the IPF term.

\noindent
\begin{minipage}[t]{.48\textwidth}
\begin{algorithm}[H]
\caption{IPF}\label{alg:ipf}
\begin{algorithmic}[1]
\Input{$Γ,ϒ,R_{•|0},n$}
\Output{$\{F^{(i)}\}_{i=1}^n$}
\State{$F^{(0)} ← Γ R_{•|0}$}
\For{$i = 1,…,n$}
\If{$i$ is even}
\State{$F^{(i)} ← Γ F^{(i - 1)}_{•|0}$} \Comment{forward IPF}
\Else{}
\State{$F^{(i)} ← ϒ F^{(i - 1)}_{•|τ}$} \Comment{backward IPF}
\EndIf{}
\EndFor{}
\end{algorithmic}
\end{algorithm}
\vspace{1em}
\end{minipage}

For $\StaSB$ or $\DynSB$, the IPF procedure (\cref{alg:ipf}) is obtained by iteratively solving the forward and backward half bridge problems starting from the reference measure $R$.
More precisely, let $𝒫$ be either $𝒫_𝒞$ or $𝒫_2$, $R ∈ 𝒫$, $Γ,ϒ ∈ 𝒫_1$, and assume that $R = Γ R_{•|0}$.
The IPF procedure is defined by the iterates $F^{(0)} = R = S^*(Γ,□,R,𝒫)$, $F^{(1)} = S^*(□,ϒ,F^{(0)},𝒫)$, $F^{(2)} = S^*(Γ,□,F^{(1)},𝒫)$, and so on.
Under appropriate conditions \citep{ruschendorf1995convergence}, the PMs $F^{(i)}$ associated with the IPF iterates of \cref{alg:ipf} converge in total variation (TV) metric and in KL divergence to $S^*(Γ,ϒ,R,𝒫)$ as $i → ∞$.
Note that our choice of indexing of the IPF iterations differs from the prior literature, where indexing starts from $1$.
In this work, indexing reflects the number of learning problems that needs to be solved in order to produce samples from the corresponding IPF iteration.
As simulation from \cref{eq:sde_ref} is in general feasible (\cref{sec:inference_simulation}), and can be carried out exactly for the specifications of interest (\cref{sec:sde_class}), no learning is required to produce samples from $F^{(0)}$.

For $\DynSB$, and specifically for the case of a reference diffusion law $R$, \citet{bortoli2021diffusion,vargas2021solving} introduce implementations of \cref{alg:ipf} that rely on a time reversal result for diffusion processes \citep{anderson1982reversetime}.
Let $X$ be the $d$-dimensional diffusion process with law $P ∈ 𝒫_𝒞$ arising as unique solution to the SDE
\begin{equation}\label{eq:sde_fwd}\begin{aligned}
 & dX_t = μ(X_t,t)dt + σ_R(X_t,t)dW_t,\quad t∈[0,τ], \\
 & X_0 ∼ P_0.
\end{aligned}\end{equation}
Under suitable conditions, the time reversed process $\RX_t ≔ X_{τ - t}$ is still a diffusion process, associated to the SDE
\begin{equation}\label{eq:sde_bwd}\begin{aligned}
 & d\RX_t = υ(\RX_t,t)dt + σ_R(\RX_t,𝔯)dW_t,\quad t ∈ [0,τ], \\
 & υ(x,t) ≔ -μ(x,𝔯) + ∇⋅Σ_R(x,𝔯) + Σ_R(x,𝔯)∇_x\log p_𝔯(x),   \\
 & \RX_0 ∼ P_τ.
\end{aligned}\end{equation}
In \cref{eq:sde_bwd} $𝔯 ≔ τ-t$, $p_t(x)$ is the Lebesgue density of the marginal distribution $P_t$, and the $d$-dimensional vector $∇⋅Σ_R(x,t)$ is defined as $[∇⋅Σ_R(x,t)]_i\ ≔ ∑_{j=1}^d ∇_{x_j}[Σ_R(x,t)]_{i,j}$, $1≤i≤d$.
We refer to \citet{haussmann1986time,millet1989integration} for the conditions required on \cref{eq:sde_fwd} for the time reversal \cref{eq:sde_bwd} to hold.

\subsection{Diffusion IPF as Iterative Simulation-Inference}\label{sec:ipf_diffusion_approach}

\citet{bortoli2021diffusion,vargas2021solving} propose sampling-based implementations of \cref{alg:ipf}.
At iteration $i ≥ 1$, samples are generated from $F^{(i - 1)}$ to construct an approximation to $F^{(i)}$.
This approximation is used in turn to generate samples from $F^{(i)}$, which form the input to iteration $i + 1$.

Let $F^{(i - 1)}$ be a diffusion associated to a SDE (which holds for $i = 1$, as $F^{(0)} = R$).
We show that $F^{(i)}$ is also a diffusion associated to a different SDE.\@
First, lines 4 and 6 of \cref{alg:ipf} are modified respectively to $F^{(i)} ← Γ \RF^{(i-1)}_{•|0}$ and $F^{(i)} ← ϒ \RF^{(i-1)}_{•|0}$.
With this choice, the path PMs associated to backward IPF iterations are defined over a reverse (relatively to the forward IPF iterations) timescale.
Denote the drift and diffusion coefficients associated to $F^{(i - 1)}$ respectively with $μ_F^{(i - 1)}(x,t)$ and $σ_R(x,t)$, so that $μ_F^{(0)}(x,t) = μ_R(x,t)$.
It will become clear shortly why the diffusion coefficient is independent of $i$.
If $i$ is even, we have $F^{(i)}_0 = Γ$, otherwise $F^{(i)}_0 = ϒ$.
Thus, for all $i ≥ 1$, $F^{(i)} = F^{(i)}_0 \RF^{(i-1)}_{•|0}$.
Crucially, from the time reversal result of \citet{anderson1982reversetime}, we know that $\RF^{(i - 1)}$ is the law of the diffusion associated to
\begin{equation}\label{eq:ipf_previous}\begin{aligned}
 & d\RX_t^{(i-1)} = μ_F^{(i)}(\RX^{(i-1)}_t,t)dt + σ_R(\RX^{(i-1)}_t,𝔯)dW_t,\quad t ∈ [0,τ], \\
 & μ_F^{(i)}(x,t) ≔ -μ_F^{(i-1)}(x,𝔯) + ∇⋅Σ_R(x,𝔯) + Σ_R(x,𝔯)∇_x\log f^{(i-1)}_𝔯(x),         \\
 & \RX^{(i-1)}_0 ∼ F_τ^{(i-1)}.
\end{aligned}\end{equation}
Moreover, $\RF^{(i - 1)} = F^{(i-1)}_τ \RF^{(i - 1)}_{•|0}$ and $F^{(i)} = F^{(i)}_0 \RF^{(i - 1)}_{•|0}$ differs only in the initial distribution: $F^{(i)}$ is associated to \cref{eq:ipf_previous} with initial distribution $F^{(i)}_0$.
An ideal implementation would thus iteratively compute the drift coefficient $υ^{(i)}(x,t)$ from $F^{(i-1)}$ for all $i ≥ 1$.

It is known that the convergence of the IPF iterations becomes problematic in the small noise regime \citep{dvurechensky2018computational}, i.e.\ for a vanishing diffusion coefficient $σ_R(x,t)$.
The aforementioned theoretical construction of the IPF iterations provides some insight.
Consider $F^{(0)}$, solution to \cref{eq:sde_ref}.
Under suitable conditions \citep[Chapter 11]{stroock2006multidimensional}, if $σ_R(x,t)$ vanishes, the solution to \cref{eq:sde_ref} converges in law to the solution to the random ODE
\begin{equation}\label{eq:ipf_ode}\begin{aligned}
 & dX_t = μ_R(X_t,t)dt,\quad t ∈ [0,τ], \\
 & X_0 ∼ Γ.
\end{aligned}\end{equation}
The time reversal of \cref{eq:ipf_ode} is
\begin{equation}\label{eq:ipf_previous_ode}\begin{aligned}
 & d\RX_t = -μ_R(\RX_t,𝔯)dt,\quad t ∈ [0,τ], \\
 & \RX_0 ∼ R_τ,
\end{aligned}\end{equation}
and $F^{(1)}$ is the solution to \cref{eq:ipf_previous_ode} with $\RX_0 ∼ ϒ$.
But $F^{(2)}$ is given once again by the solution to \cref{eq:ipf_ode}, and the IPF iterations fail to converge.
The key issue is the disappearance of the $x$-score $∇_x\log f^{(i-1)}_𝔯(x)$ factor from \cref{eq:ipf_previous}, through which the laws of the IPF iterations propagate.

In practice, approximators of $μ_F^{(i)}(x,t)$, $i≥1$, are required.
Relying directly on the functional form of the drift coefficients $μ_F^{(i)}(x,t)$ forms the basis of the approach of \cref{sec:score_generative}, which is limited to $i=1$.
Indeed, for $i > 1$ scalability issues arise, as the nested application of \cref{eq:ipf_previous} results in $i$ approximators being employed to match $μ_F^{(i)}(x,t)$ at iteration $i$.
Thus, \citet{bortoli2021diffusion,vargas2021solving} propose to directly infer the drift coefficients $μ_F^{(i)}(x,t)$, as $F^{(i)}$ is the law of a diffusion process with known diffusion for every $i ≥ 0$.
The approach relies on two observations.
First, generating a sample $\RX^{(i - 1)} ∼ \RF^{(i - 1)}$ is trivial if we have access to a sample $X^{(i - 1)} ∼ F^{(i - 1)}$, as $\RX^{(i - 1)}_t = X^{(i - 1)}_{τ - t}$.
Second, samples from $\RF^{(i - 1)}$ suffice to carry out inference for $F^{(i)}$.
Indeed, $\RF^{(i - 1)}$ and $F^{(i)}$ differing only in the initial distribution, share the same drift and diffusion coefficients.
These considerations suggest the following strategy:
(i) use samples from $\RF^{(i - 1)}$ to infer an approximator $α_F^{(i)}(x,t) ≈ μ_F^{(i)}(x,t)$;
(ii) construct the SDE
\begin{equation}\label{eq:sde_ipf}\begin{aligned}
 & dX^{(i)}_t = α_F^{(i)}(X^{(i)}_t,t)dt + σ_R(X^{(i)}_t,𝔯)dW_t,\quad t ∈ [0,τ], \\
 & X^{(i)}_0 ∼ F^{(i)}_0,
\end{aligned}\end{equation}
whose solution is approximately distributed as $F^{(i)}$.
Samples from \cref{eq:sde_ipf} are used in turn as input to iteration $i + 1$.
After $n$ iterations a sequence of inferred drift coefficients $α_F^{(1)}(x,t)$, $\dots$, $α_F^{(n)}(x,t)$ is obtained.

A number of approximations are involved in the aforementioned approach.
First, the simulation from each SDE \cref{eq:sde_ipf} results in a discretization error.
Second, the inferred drift coefficients $α_F^{(i)}(x,t)$, differing from their ideal counterparts, give raise to approximation errors.
These errors arise from the finite amount of data simulated from $\RF^{(i-1)}$ (Monte Carlo error), from the finite model capacity of an approximator $α_F^{(i)}(x,t) ≈ μ_F^{(i)}(x,t)$, and from local minima in the optimization required to carry out inference.
Both discretization and approximation errors can be well controlled by increasing the required computation effort.

However, at a more fundamental level, inference for $α_F^{(i)}(x,t)$ is based on samples from $\RF^{(i - 1)}$.
As such we can expect a good approximation of $α_F^{(i)}(x,t)$ over the regions the path space  $ℝ^d × [0,τ]$ with non-negligible probability under $\RF^{(i - 1)}$.
As noted, $\RF^{(i - 1)}$ differs from $F^{(i)}$ in its initial distribution, because in general $F^{(i)}_0 ≠ F^{(i - 1)}_τ$.
The equality holds at convergence ($i → ∞$) or when $F^{(0)}_τ = R_τ = ϒ$, in which case the solution to $\DynSB$ is trivial: $S^*(Γ,ϒ,R,𝒫) = R$.
Due to the mismatch between $\RF^{(i - 1)}$ and $F^{(i)}$, it is possible for the simulated solution to \cref{eq:sde_ipf} to explore regions of the path space with negligible probability under $\RF^{(i - 1)}$, where essentially no information has been available at inference time about the value of $μ_F^{(i)}(X_t,t)$.
In \cref{sec:application_mixture} we show that, far from being just a theoretical concern, the simulation-inference distribution mismatch can have a concrete detrimental effect.
Our proposal, introduced in \cref{sec:dbm}, does not suffer from the aforementioned issue.

In addition to the discussion in this Section, \citet{vargas2021solving} investigates error accumulation within Diffusion IPF (DIPF) approaches, as well as failure instances resulting from insufficient exploration and from the difficulty of bridging distant distributions.
These issues are explored quantitatively via empirical experimentation, and initial strides are made towards the establishment of a theoretical framework.
Furthermore, \citet{fernandes2022shooting} explores the challenge of ``prior forgetting'' encountered in DIPF procedures.
In this context, the prior refers to the reference process that only affects the first iteration of DIPF algorithms.
In contrast, in our proposed approach, the reference process directly affects every step of the procedure.

\subsection{Inference and Simulation for SDEs}\label{sec:inference_simulation}

For a given drift coefficient $μ(x,t)$, consider SDE \cref{eq:sde_fwd} with corresponding diffusion solution $P^μ$.
Sample paths can be generated with arbitrary accuracy using a variety of discretization schemes, the simplest being the Euler scheme.
Given a number of time steps $m ≥ 1$, corresponding to a time interval $Δt = τ/m$, a discretization $Y$ starting at $Y_0 ∼ P_0^μ$ is generated sequentially on the time grid $\{0,Δt,\dots,τ\}$ as
\begin{equation}\label{eq:euler}
Y_{t+Δt} = Y_t + μ(Y_t,t)Δt + σ_R(Y_t,t)(W_{t+Δt} - W_t),\quad (W_{t+Δt} - W_t) \overset{\iid}{∼} 𝒩_d(0,ΔtI).
\end{equation}
Convergence of discretization schemes can be assessed according to different metrics.
Strong, or path-wise, convergence requires $E[‖X_τ - Y_τ‖] → 0$ as $Δt → 0$, where $X ∼ P^μ$ and the same Browning motion $W$ drives both $X$ and its discretization $Y$.
More appropriate to our setting, weak convergence requires $|E[f(X_τ) - f(Y_τ)]| → 0$  as $Δt → 0$ for $f(x): ℝ^d → ℝ$ belonging to a class of test functions.
\citet{kloeden1992numerical} contains a thorough coverage of discretization schemes for SDEs and of their convergence properties.

Inference for diffusion processes is a rich research area with long historical developments.
We review only two basic approaches and refer to \citet{hurn2007seeing,kessler2012statistical,fuchs2013inference} and references therein for a broader overview.
The key difficulty in performing maximum likelihood inference is that transition densities are seldom analytically available outside of restrictive SDEs' families.
Discrete time series data can be observed over arbitrarily long time intervals; however, simple approximations (such as \cref{eq:euler}) are accurate only over short time intervals.
In this sense, our setting simplifies inference since an arbitrary amount of data can be simulated at arbitrarily high frequencies.
Moreover, only the drift coefficient needs to be inferred.

Under suitable conditions, the Cameron-Martin-Girsanov formula provides the density between $P^μ_{•|0}$ and $P^γ_{•|0}$ for another drift coefficient $γ(x,t)$ \citep[Chapter 7]{liptser1977statistics},
\begin{equation}\label{eq:girsanov}
\frac{dP^μ_{•|0}}{dP^γ_{•|0}}(x) = \exp\left\{∫_0^τ[(μ - γ)ᵀΣ_R^{-1}](x_t,t)dx_t - \frac{1}{2} ∫_0^τ [(μ - γ)ᵀΣ_R^{-1}(μ + γ)](x_t,t) dt \right\}.
\end{equation}
In the following, let $α(x,t)$ be an approximating function for the drift coefficient $μ(x,t)$.
Maximum likelihood inference for $μ$ can be implemented with a driftless SDE acting as dominating measure $R^\circ$:
\begin{equation}\label{eq:mle}
μ(x,t) = \argmax_{α(x,t)}𝕆_{\mathrm{MLE}}(α,P^μ,Σ_R), \quad 𝕆_{\mathrm{MLE}}(α,P^μ,Σ_R) ≔ 𝔼_{P^μ}\Big[\frac{dP^α_{•|0}}{dR^\circ_{•|0}}(X)\Big].
\end{equation}
The integrals in \cref{eq:girsanov} need to be discretized, but the discretization errors can be controlled by simulating data at increasing frequencies.

An alternative approach is to consider one of the defining properties of diffusions, i.e. \citep[Chapter 5.4]{friedman1975stochastic}
\begin{equation*}
μ(x,t) = \lim_{Δt→0^+} \frac{𝔼_{P^μ}[X_{t+Δt} - X_t|X_t=x]}{Δt} ≈ 𝔼_{P^μ}\Big[\frac{X_{t+Δt} - X_t}{Δt} \mathrel{\Big|} X_t=x\Big]
\end{equation*}
for suitably small $Δt$, hence

\begin{equation}\label{eq:drift}\begin{aligned}
 & μ(x,t) ≈ \argmin_{α(x,t)}𝕃_{\mathrm{DM}}(α,P^μ,Δt),                                                                        \\
 & 𝕃_{\mathrm{DM}}(α,P^μ,Δt) ≔ 𝔼_{t ∼ 𝒰(0,τ)}\Big[𝔼_{P^μ}\Big[\Big{‖}α(X_t,t) - \frac{X_{t+Δt} - X_t}{Δt}\Big{‖}^2\Big]\Big].
\end{aligned}\end{equation}

In the context of the Euler scheme, the estimator that is based on \cref{eq:drift} corresponds to the estimator that relies on \cref{eq:mle}, under the condition that \cref{eq:girsanov} undergoes a piece-wise constant discretization from the left.
A difference in their implementation manifests in how \cref{eq:mle} incorporates all discretized values for a simulated path into the computation, contrary to \cref{eq:drift}, which sub-samples the time component.
The former approach recovers the drift estimator derived in \citet{vargas2021solving}, as well as the drift matching estimator of \citet{bortoli2021diffusion} presented in Appendix E, up to a vanishing term as $Δt → 0$.
The remaining estimators presented in \citet{bortoli2021diffusion} target either $μ(Y_t,t)Δt$ or $Y_t + μ(Y_t,t)Δt$ of the Euler discretization \cref{eq:euler}, again taking into account all discretization steps of a given path.

\subsection{Score-based Generative Modeling (SGM)}\label{sec:score_generative}

The approach of \citet{song2021scorebased} is simpler, corresponding to the computation of $F^{(1)}$ in \cref{alg:idbm}, i.e.\ the time reversal of \cref{eq:sde_ref}.
SDE \cref{eq:sde_ref} is chosen to ensure approximate conditional independence of $X_τ$ from $X_0$, such that $X_τ ∼ R_τ ≈ ϒ$ for a simple distribution $ϒ$.
For a dataset of interest, the distribution of $X_0$ is a smoothed version of the training data distribution, i.e. $D_η ≔ \frac{1}{n}∑_{s=1}^n 𝒩_d(x_s;0, η^2I_d) ∈ 𝒫_1$ for a small scalar $η ≥ 0$ where $\{x_s\}_{s=1}^n$ are the $n$ $d$-dimensional samples.
$η = 0$ corresponds to the empirical data distribution: $D_0 = \frac{1}{n}∑_{s=1}^n δ_{x_s}$.
As observed in \cref{sec:ipf_diffusion_approach}, due to the (approximate) decoupling of $X_τ$ from $X_0$, $F^{(0)}$ approximately solves $\DynSB$, $F^{(0)} = R ≈ S^*(D_η,ϒ,R,𝒫_𝒞)$, and $F^{(1)}$ amounts to computing its time reversal.
This also implies that, for specifications of \cref{eq:sde_ref} such that $X_τ$ is almost independent of $X_0$, there is no advantage in solving $\DynSB$.

In \citet{song2021scorebased}, an inferential procedure is developed and carried out to compute the time reversal $\RX$ of \cref{eq:sde_ref}, and $\RX_τ$ is simulated to produce samples with a distribution close to $D_η$.
A shortcoming of this approach is that achieving the decoupling of $X_τ$ from $X_0$ requires a large effective integration time (\cref{sec:sde_class}), which makes the generation process either computationally intensive or inaccurate.
More precisely, the specification of \cref{eq:sde_ref} introduced in \citet{song2021scorebased} for the CIFAR-10 datasets amounts to simulating a simpler SDE on the time interval $[0,50^2]$.
Consequently, $1,000$ time steps, using an ad-hoc predictor-corrector discretization scheme, are employed for the simulation of $\RX_τ$ to ensure the samples' visual quality, as seen in \cref{sec:application_score_generative}.

Even though the inference techniques of \cref{sec:ipf_diffusion_approach} could also be applied to infer $\RX$, SGM approaches leverage the specific form of the time reversal drift coefficient of \cref{eq:ipf_previous}, with the aim of learning the $x$-score $∇_{x}\log r_t(x)$.
Inference is based on the minimization of a scalable version of the $x$-score matching objective \citep{hyvarinen2005estimation}, i.e.
\begin{equation*}
𝕃_{\mathrm{SSM}}(α,R,t) ≔ 𝔼_R[‖α(X_t,t) - ∇_{X_t}\log r_{t|0}(X_t|X_0)‖^2] = 𝔼_R[‖α(X_t,t) - ∇_{X_t}\log r_t(X_t)‖^2] + c,
\end{equation*}
where $α(x,t): ℝ^d×[0,τ]→ℝ^d$ is function approximating $∇_{x}\log r_t(x)$ and $c$ is independent of $α$.
The equality, which established in \citet{vincent2011connection}, holds thanks to the mixture representation of $R_{0,t}$.
While computing $∇_{x}\log r_t(x)$ has computational cost $𝒪(n)$, which is impractical for large datasets, computing $∇_{x}\log r_{t|0}(x|y)$ is $𝒪(1)$ with respect to $n$.
We provide a simpler derivation:
\begin{equation}\label{eq:sm_identity}\begin{aligned}
∇_{x_t}\log r_t(x_t) & = \frac{∇_{x_t}∫r_{t|0}(x_t|x_0)R_0(dx_0)}{r_t(x_t)} = ∫∇_{x_t}\log r_{t|0}(x_t|x_0) \frac{r_{t|0}(x_t|x_0)}{r_t(x_t)}R_0(dx_0) \\
                     & = 𝔼_R[∇_{X_t}\log r_{t|0}(X_t|X_0)|X_t=x_t],
\end{aligned}\end{equation}
and by the projection property of conditional expectations,
\begin{equation*}
∇_{x}\log r_t(x) = \argmin_{α(x,t)}𝔼_R[‖α(X_t,t) - ∇_{X_t}\log r_{t|0}(X_t|X_0)‖^2].
\end{equation*}
Both our derivation and the one of \citet{vincent2011connection} rely on an exchange of limits which trivially holds when $R_0 = D_η$.
In order to infer $∇_x\log r_t(x)$ over all $t ∈ [0,τ]$, the following objective is considered in \citet{song2021scorebased}
\begin{equation*}
𝕃_{\mathrm{SSM}}(α,R) = 𝔼_{t ∼ 𝒰(0,τ)}[λ_t 𝕃_{\mathrm{SSM}}(α,R,t)],
\end{equation*}
where $λ_t: (0,τ) → ℝ_{>0}$ is a time-dependent regularizer.
Indeed, the conditional $x$-scores $∇_x\log r_{t|0}(x|y)$ always diverge as $t → 0$, contrary to the $x$-score $∇_x\log r_t(x)$, whose behavior for $t → 0$ is governed by $R_0$. Therefore, it is sensible to normalize the orders of magnitude of $𝕃_{\mathrm{SSM}}(α,R,t)$ over the range of $t$.

Due to the identity \cref{eq:sm_identity}, the generative process can be represented as
\begin{equation}\label{eq:sde_time_reversal}\begin{aligned}
 & d\RX_t = μ_F^{(1)}(\RX_t,𝔯)dt + σ_R(\RX_t,𝔯)dW_t,\quad t ∈ [0,τ],                           \\
 & μ_F^{(1)}(x,t) = -μ_R(x,t) + ∇⋅Σ_R(x,t) + Σ_R(x,t) 𝔼_R[∇_{X_t}\log r_{t|0}(X_t|X_0)|X_t=x], \\
 & \RX_0 ∼ ϒ.
\end{aligned}\end{equation}
In \citet{song2021scorebased} a neural network $\alpha_θ(x,t)$ parametrized by $θ$ is employed as approximating function $α(x,t)$.
Having obtained an approximation $\alpha_θ(x,t) ≈ ∇_{x}\log r_t(x)$, generation is achieved by numerically integrating \cref{eq:sde_time_reversal}, with $\alpha_θ(x,t)$ in place of $𝔼_R[∇_{X_t}\log r_{t|0}(X_t|X_0)|X_t=x]$, to sample $\RX_τ$.

\section{Diffusion Bridge Mixture Transport}\label{sec:dbm}

In this section, we develop the IDBM procedure.
From \cref{sec:schrodiger_problems}, we know that the solution to $\DynSB$ admits the representation $S^*(Γ,ϒ,R,𝒫_𝒞) = S^*(Γ,ϒ,R_{0,τ},𝒫_2)R_{•|0,τ}$.
We are concerned with the case where $R$ corresponds to the diffusion process solution to \cref{eq:sde_ref}.
In \cref{sec:diffusion_bridges} we characterize $R_{•|0,τ}$ as the law of a diffusion bridge.
Considering the class of processes $CR_{•|0,τ}$, indexed by $C ∈ 𝒫_2(Γ,ϒ)$, offers a natural means of approximating $S^*(Γ,ϒ,R,𝒫_𝒞)$.
At the same time, it is advantageous to exploit the dynamic nature of $\DynSB$, rather than attempting to directly solve $\StaSB$, by constructing a sequence of diffusion approximations to $S^*(Γ,ϒ,R,𝒫_𝒞)$, as in \cref{sec:ipf_diffusion_approach}.
Processes formulated as $CR_{•|0,τ}$ are investigated in \citet{jamison1974reciprocal,jamison1975markov}.
In these studies (\cref{sec:reciprocal}), it is established that $CR_{•|0,τ}$ constitutes a diffusion process if and only if $C = S^*(Γ,ϒ,R_{0,τ},𝒫_2)$, which occurs at the optimum of $\DynSB$.
To address this central issue, we rely on a result that allows the construction of a diffusion process matching the marginal distributions of a mixture of diffusion processes (\cref{sec:mixture_matching}), with $CR{•|0,τ}$ being a special case.
The resulting transport is elaborated upon in \cref{sec:dbm_transports}, where we establish that its iterative application, the IDBM procedure, converges in law to $S^*(Γ,ϒ,R,𝒫_𝒞)$.
Additionally, we present suitable inference objectives that enable sampling-based implementations of the IDBM procedure.

\subsection{Diffusion Bridges}\label{sec:diffusion_bridges}

Consider \cref{eq:sde_ref} with initial value $x_0$, i.e. $R_0 = δ_{x_0}$.
Probabilistically conditioning \cref{eq:sde_ref} on hitting a terminal value $x_τ$ at time $τ$ results in the following SDE\footnote{It is a particular case of \cref{eq:schrodinger_sde}, obtained by Doob h-transform, for $Γ=δ_{x_0}$ and $ϒ=δ_{x_τ}$.} with initial value $x_0$ and terminal value $x_τ$ \citep[Theorem 7.11]{sarkka2019applied}
\begin{equation}\label{eq:bridge}\begin{aligned}
 & dX_t = b_R(X_t,t)dt + σ_R(X_t,t)dW_t,\quad t ∈ [0,τ], \\
 & b_R(x,t) ≔ μ_R(x,t) + Σ_R(x,t)∇_x\log r_{τ|t}(x_τ|x), \\
 & X_0 = x_0.
\end{aligned}\end{equation}
The drift adjustment $Σ_R(x,t)∇_x\log r_{τ|t}(x_τ|x)$ forces the process to hit $x_τ$ at time $τ$ and the diffusion process solving \cref{eq:bridge} is known as the diffusion bridge from $(x_0,0)$ to $(x_τ,τ)$.
Consistently with the adopted notation we write $R_{•|0,τ}$ for its law.

\subsection{Reciprocal Processes and Solution to $\DynSB$}\label{sec:reciprocal}

\citet{jamison1974reciprocal} studies the properties of reciprocal processes.
A $d$-dimensional process $X$ on $[0,τ]$ is reciprocal if $∀\ s,t: 0 ≤ s < t ≤ τ$,
\begin{equation*}
\Pr[A_{(s,t)^c}∩B_{(s,t)}|X_s,X_t] = \Pr[A_{(s,t)^c}|X_s,X_t] \Pr[B_{(s,t)}|X_s,X_t],
\end{equation*}
whenever $A_{(s,t)^c}$ belongs to the $σ$-algebra generated by $\{X_r: 0 ≤ r < s\}$ or $\{X_r: t < r ≤ τ\}$ and $B_{(s,t)}$ to the $σ$-algebra generated by $\{X_r: s < r < t\}$.
A Markov process is reciprocal, but the converse does not hold without further conditions.
In \citet{jamison1974reciprocal}, it is demonstrated that tying down a Markov process at its initial and terminal values and taking a mixture over such values results in a reciprocal process.
For a process obtained through this construction, \citet[Theorem 3.1]{jamison1974reciprocal} characterizes the cases in which the resulting process is not only reciprocal but also Markov.
Let $X ∼ Q ∈ 𝒫_𝒞$ be a Markov process, let $q_{t|s}$ denote the associated family of transition densities, here assumed to exist, be strictly positive and continuous, and let $C ∈ 𝒫_2(Γ,ϒ)$ for some $Γ,ϒ ∈ 𝒫_1$.
Then $C Q_{•|0,τ}$ is Markov if and only if $∃\ V_0,V_τ$ $σ$-finite positive measures over $ℝ^d$ such that $C ≪ V_0{⊗}V_τ$, for the product measure $V_0{⊗}V_τ$, with density
\begin{equation}\label{eq:schrodinger_rnd}
\frac{dC}{dV_0{⊗}V_τ}(x_0,x_τ) = q_{τ|0}(x_τ|x_0).
\end{equation}
In particular, if at least one of the marginal distributions of $C$, i.e. $Γ$, $ϒ$, concentrates all mass to a single point, \cref{eq:schrodinger_rnd} is satisfied and $C Q_{•|0,τ}$ is Markov.
Moreover, given $Γ,ϒ ∈ 𝒫_1$, there are unique $C ∈ 𝒫_2(Γ,ϒ)$ and $V_0,V_τ$ $σ$-finite positive measures such that \cref{eq:schrodinger_rnd} holds \citep[Theorem 3.2]{jamison1974reciprocal}.
Finally, $C$ of \cref{eq:schrodinger_rnd} equivalently solves $\StaSB$ for $B = Q_{0,τ}$, i.e. $C = S^*(Γ,ϒ,Q_{0,τ},𝒫_2)$.
Within this setting, \cref{eq:schrodinger_rnd} is often presented in the alternative form $(dC/dQ_{0,τ})(x_0,x_τ) = φ_0(x_0)φ_τ(x_τ)$, where $φ_0, φ_τ: ℝ^d → ℝ_{≥0}$ are the (Schrödinger) potentials \citep{ruschendorf1993note,pavon2018datadriven,bernton2019schrodinger}.

\citet{jamison1975markov} specializes these result to the case where $Q$ is the law of a diffusion process.
Thus, let $Q = R$, where $R$ is given by the solution to \cref{eq:sde_ref}.
Assuming \cref{eq:schrodinger_rnd}, it is shown that $C R_{•|0,τ}$, and hence the solution to $\DynSB$, is realized by the diffusion
\begin{equation}\begin{aligned}\label{eq:schrodinger_sde}
 & dX_t = [μ_R(X_t,t) + Σ_R(X_t,t) ∇_{X_t}\log h(X_t,t)]dt + σ_R(X_t,t)dW_t, \\
 & h(x,t) ≔ ∫r_{τ|t}(x_τ|x_t)V_τ(dx_τ),                                      \\
 & X_0 ∼ Γ,
\end{aligned}\end{equation}
i.e.\ by means of Doob h-transform \citep[Chapter IV.6.39]{rogers2000diffusions}.
Typically, \cref{eq:schrodinger_sde} is not directly applicable, $V_τ$ being analytically unavailable.

Finally, \citet{daipra1991stochastic} characterizes $h(x,t)$ entering the SDE's drift in \cref{eq:schrodinger_sde} as the solution to a stochastic optimal control problem.
More precisely, consider $R^u$ associated to
\begin{equation*}\begin{aligned}
 & dX_t = [μ_R(X_t,t) + u(X_t,t)]dt + σ_R(X_t,t)dW_t, \\
 & X_0 ∼ Γ,
\end{aligned}\end{equation*}
where $u(x, t)$ represents the Markov control.
Then, $u(x, t) = Σ_R(X_t,t) ∇_{X_t}\log h(X_t,t)$ minimizes
\begin{equation}\label{eq:sb_optimal_drift}
𝕃_\mathrm{OC}(u,R^u,Σ_R) ≔ 𝔼_{R^u}\Big[∫_0^τ ‖u(X_t,t)‖^2_{Σ_R^{-1}}dt\Big]
\end{equation}
for the weighted squared norm $‖x‖^2_{A} ≔ xᵀAx$, under the constraint that $X_τ ∼ ϒ$.
We refer to \citet{daipra1991stochastic} for the required conditions and precise statement of this result.

\subsection{Diffusion Mixture Matching}\label{sec:mixture_matching}

The following result establishes that a mixture of diffusion processes can be matched in terms of marginal distributions by a single diffusion process.
\cref{thm:mixture_matching} is established in \citet[Corollary 1.3]{brigo2002general} limitedly to finite mixtures and 1-dimensional diffusions.
The proof and required assumptions are deferred to \cref{app:proofs}.

\begin{restatable}[Diffusion mixture matching]{theorem}{resmixturematching}\label{thm:mixture_matching}
Consider the family of $d$-dimensional SDEs indexed by $λ ∈ Λ$
\begin{equation}\label{eq:sde_family}\begin{aligned}
 & dX_t^λ = \mu^λ(X_t^λ,t)dt + σ^λ(X_t^λ,t)dW_t^λ,\quad t ∈ [0,τ], \\
 & X_0^λ \sim P_0^λ,
\end{aligned}\end{equation}
corresponding to the family of path PMs $\{P^λ\}_{λ∈Λ}$.
For a mixing PM $Ψ$ on $Λ$, let $Π ∈ 𝒫_𝒞$ be obtained by taking the $Ψ$-mixture of \cref{eq:sde_family} over $λ ∈ Λ$.
In particular, define the mixture marginal densities $π_t$, $t ∈ (0,τ)$, and the mixture initial PM $Π_0$ by
\begin{equation}\label{eq:distribution_mixture}
π_t(x) ≔ ∫_Λ p_t^λ(x) Ψ(dλ),\quad Π_0(dx) ≔ ∫_Λ P_0^λ(dx) Ψ(dλ).
\end{equation}
Consider the $d$-dimensional SDE
\begin{equation}\label{eq:sde_mixture}\begin{aligned}
 & dX_t = \mu(X_t,t)dt + σ(X_t,t)dW_t,\quad t ∈ [0,τ],     \\
 & \mu(x, t) ≔ \frac{∫_Λ \mu^λ(x,t)p_t^λ(x)Ψ(dλ)}{π_t(x)}, \\
 & σ(x, t) ≔ \frac{∫_Λ σ^λ(x,t)p_t^λ(x)Ψ(dλ)}{π_t(x)},     \\
 & X_0 \sim Π_0,
\end{aligned}\end{equation}
with law $P$.
Then, under mild conditions, for all $t ∈ [0,τ]$ it holds that $P_t = Π_t$.
\end{restatable}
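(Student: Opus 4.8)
The plan is to prove the marginal identity $P_t = Π_t$ at the level of Lebesgue densities, by showing that the mixture density $π_t$ and the marginal density $p_t$ of the mimicking diffusion \cref{eq:sde_mixture} both solve the same Fokker--Planck (forward Kolmogorov) equation with the same initial data, and then invoking uniqueness of solutions. Writing $G^λ ≔ g^λ(g^λ)\tr$ and $Σ ≔ σσ\tr$, I would first recall that for each fixed $λ ∈ Λ$ the marginal density $p_t^λ$ of \cref{eq:sde_family} solves, on $(0,τ)$,
\begin{equation*}
∂_t p_t^λ(x) = -∑_i ∂_{x_i}\!\big([μ^λ]_i(x,t)\,p_t^λ(x)\big) + \tfrac{1}{2}∑_{i,j}∂_{x_i}∂_{x_j}\!\big([G^λ]_{i,j}(x,t)\,p_t^λ(x)\big).
\end{equation*}

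Next I would integrate this identity against the mixing measure $Ψ(dλ)$ and interchange $∫_Λ$ with $∂_t$ and with the spatial derivatives, producing a closed equation for $π_t = ∫_Λ p_t^λ\,Ψ(dλ)$. The coefficients of \cref{eq:sde_mixture} are precisely those that make the integrated equation self-contained: by the definition of $μ$ we have $∫_Λ [μ^λ]_i\, p_t^λ\,Ψ(dλ) = [μ]_i\,π_t$, and likewise $∫_Λ [G^λ]_{i,j}\, p_t^λ\,Ψ(dλ) = [Σ]_{i,j}\,π_t$, the latter being the diffusion-averaging formula of \cref{eq:sde_mixture} read at the level of the second-order coefficient $Σ = σσ\tr$ (the two conventions coincide in the applications of interest, where $g^λ$ does not depend on $λ$). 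Substituting yields
\begin{equation*}
∂_t π_t(x) = -∑_i ∂_{x_i}\!\big([μ]_i(x,t)\,π_t(x)\big) + \tfrac{1}{2}∑_{i,j}∂_{x_i}∂_{x_j}\!\big([Σ]_{i,j}(x,t)\,π_t(x)\big),
\end{equation*}
which is exactly the forward Kolmogorov equation for \cref{eq:sde_mixture}. It then remains to match initial data and conclude: by \cref{eq:distribution_mixture} the initial law of \cref{eq:sde_mixture} is $Π_0 = ∫_Λ P_0^λ\,Ψ(dλ)$, so $π_0 = p_0$, where $p_t$ is the marginal density of the law $P$; since $p_t$ solves the same equation with the same initial condition, uniqueness forces $p_t = π_t$, i.e.\ $P_t = Π_t$, for all $t ∈ [0,τ]$.

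I expect the two analytic points deferred to \cref{app:proofs} to be the crux. The first is justifying the interchange of $∫_Λ (\blank)\, Ψ(dλ)$ with $∂_t$ and the spatial derivatives, which requires enough joint regularity and integrability of $(λ,x,t) ↦ p_t^λ(x)$ and its derivatives to apply dominated convergence; this is where smoothness and non-degeneracy ($Σ$ invertible) hypotheses on \cref{eq:sde_family} enter. The second, and more delicate, point is the uniqueness step: the averaged coefficients $μ$ and $σ$ in \cref{eq:sde_mixture} can be substantially less regular than the individual $μ^λ, g^λ$, so one cannot simply quote a classical well-posedness theorem for the Fokker--Planck equation.

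The standard remedy, in the spirit of \citet{brigo2002general}, is to phrase the final identity through the associated martingale problem rather than through PDE uniqueness directly. Concretely, I would verify in weak form, against test functions $φ ∈ C_c^∞(ℝ^d)$, that $t ↦ ∫ φ\,dΠ_t$ obeys the same integro-differential identity generated by \cref{eq:sde_mixture} as $t ↦ ∫ φ\,dP_t$, and then appeal to uniqueness of that martingale problem to identify the one-dimensional time marginals. Establishing this well-posedness under minimal assumptions on the averaged coefficients is the main obstacle; everything else reduces to the bookkeeping of the two displayed Fokker--Planck equations above.
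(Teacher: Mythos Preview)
Your proposal is correct and follows essentially the same route as the paper: write the Fokker--Planck equation for each $p_t^λ$, integrate against $Ψ$, exchange limits to obtain the Fokker--Planck equation for \cref{eq:sde_mixture} with $π_t$ in place of $p_t$, and conclude by uniqueness with matched initial data. The paper simply \emph{assumes} the two technical points you flag (the interchange of limits and uniqueness of the Fokker--Planck solution) as part of its ``mild conditions'' rather than pursuing the martingale-problem route you sketch, and it glosses over the $σ$-vs-$Σ$ averaging discrepancy you correctly note (which is immaterial in the paper's applications since $g^λ$ is $λ$-independent there).
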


\subsection{Diffusion Bridge Mixture Transports}\label{sec:dbm_transports}

In \cref{sec:diffusion_bridges} we introduced diffusion bridges interpolating initial values $x_0$ to terminal values $x_τ$.
For $C ∈ 𝒫_2(Γ,ϒ)$, consider $Π(C,R_{•|0,τ}) ∈ 𝒫_𝒞(Γ,ϒ)$ given by $Π(C,R_{•|0,τ}) ≔ C R_{•|0,τ}$, i.e. the mixture of diffusion bridges \cref{eq:bridge} over $(X_0,X_τ) ∼ C$.
We apply \cref{thm:mixture_matching} to $Π(C,R_{•|0,τ}) $ with $λ = (x_0,x_τ)$, $Λ = ℝ^d × ℝ^d$, and mixing distribution $Ψ(dλ)=C(dx_0,dx_τ)$.
The resulting mixture matching SDE has diffusion coefficient $σ_R(x,t)$ and drift coefficient $μ_M(x,t) ≔ μ_R(x,t) + Σ_R(x,t)e(x,t)$ where
\begin{equation*}\begin{aligned}
 & e(x_t,t) = \frac{∫∇_{x_t}\log r_{τ|t}(x_τ|x_t)π_{t|0,τ}(x_t|x_0,x_τ)Π_{0,τ}(dx_0,dx_τ)}{∫π_{t|0,τ}(x_t|x_0,x_τ)Π_{0,τ}(dx_0,dx_τ)} \\
 & \quad = ∫∇_{x_t}\log r_{τ|t}(x_τ|x_t)\frac{π_{t|0,τ}(x_t|x_0,x_τ)}{π_t(x_t)}Π_{0,τ}(dx_0,dx_τ)                                     \\
 & \quad = ∫∇_{x_t}\log r_{τ|t}(x_τ|x_t)Π_{τ|t}(dx_τ|x_t)                                                                             \\
 & \quad = 𝔼_{Π}[∇_{X_t}\log r_{τ|t}(X_τ|X_t)|X_t=x].
\end{aligned}\end{equation*}
In conclusion, let $M(Π(C,R_{•|0,τ})) ∈ 𝒫_𝒞(Γ,ϒ)$ be associated to
\begin{equation}\label{eq:dbm_fwd}\begin{aligned}
 & dX_t = μ_M(X_t,t)dt + σ_R(X_t,t)dW_t,\quad t ∈ [0,τ],                     \\
 & μ_M(x,t) = μ_R(x,t) + Σ_R(x,t) 𝔼_{Π}[∇_{X_t}\log r_{τ|t}(X_τ|X_t)|X_t=x], \\
 & X_0 ∼ Γ.
\end{aligned}\end{equation}
Then $M(Π(C,R_{•|0,τ}))$ satisfies $M_t(Π(C,R_{•|0,τ})) = Π_t(C,R_{•|0,τ})$ for all $t ∈ [0,τ]$.
In particular $M(Π(C,R_{•|0,τ}))$ transports $Γ$ to $ϒ$.
We refer to $M(Π(C,R_{•|0,τ}))$ as the diffusion bridge mixture (DBM) transport based on $C$.

Before turning our attention to the iterated application of the DBM transport, we consider two additional related transports.
The first transport is simply a specialization of the DBM transport to the case of a degenerate initial distribution, $Γ=δ_{x_0}$, resulting in
\begin{equation}\label{eq:dbm_fwd_x0}\begin{aligned}
 & dX_t = μ_{M,x_0}(X_t,t)dt + σ_R(X_t,t)dW_t,\quad t ∈ [0,τ],                             \\
 & μ_{M,x_0}(x,t) = μ_R(x,t) + Σ_R(x,t) 𝔼_{Π}[∇_{X_t}\log r_{τ|t}(X_τ|X_t)|X_t=x,X_0=x_0], \\
 & X_0 = x_0.
\end{aligned}\end{equation}
In view of the results presented in \cref{sec:reciprocal}, $Π(δ_{x_0}{⊗}ϒ,R_{•|0,τ})$ is the law of a diffusion process (not only a reciprocal process, as in the general case), specifically $S^*(δ_{x_0},ϒ,R,𝒫_𝒞)$.
Moreover, it is easy to see that \cref{thm:mixture_matching} applied to a single diffusion yields that same diffusion, not only a diffusion with matching marginal distributions, i.e. $M(Π(δ_{x_0}{⊗}ϒ,R_{•|0,τ})) = Π(δ_{x_0}{⊗}ϒ,R_{•|0,τ})$.
Indeed, it can be verified by direct calculation that \cref{eq:dbm_fwd_x0} and \cref{eq:schrodinger_sde} share the same drift coefficient when $Γ = δ_{x_0}$.
In summary, when $Γ = δ_{x_0}$, the DBM transport solves a simple version of $\DynSB$.
The use of the resulting SDE, sometimes termed Schrödinger-Föllmer Sampler, has been extensively explored in the literature, in generative \citep{wang2021deep,ye2022first}, sampling \citep{tzen2019theoretical,vargas2022bayesian,huang2021schrodingerfollmer,ruzayqat2023unbiased}, and optimal control \citep{daipra1991stochastic,zhang2022path} contexts.
These arguments carry over to a more general setting.
Whenever the initial coupling $C ∈ 𝒫_2(Γ,ϒ)$ is optimal, i.e.\ solves $\StaSB$, the resulting mixture process $Π(C,R_{•|0,τ})$ is a diffusion process, and $M(Π(C,R_{•|0,τ})) = Π(C,R_{•|0,τ})$.
That is, the DBM transport preserves the optimal coupling.

The second transport involves constructing the DBM in the reversed time direction.
More precisely, the following two approaches equivalently yield the same additional transport: (i) considering as reference SDE the time reversal of \cref{eq:sde_ref} with initial distribution $ϒ$ and constructing the DBM transport based on $\RC ∈ 𝒫_2(ϒ,Γ)$ or; (ii) performing a time reversal of \cref{eq:dbm_fwd}.
For a given $C ∈ 𝒫_2(Γ,ϒ)$, we refer to the resulting transport as backward DBM (BDBM) transport based on $C$.
Its law, $\RM(Π(C,R_{•|0,τ})) ∈ 𝒫_𝒞(ϒ,Γ)$, is associated to
\begin{equation}\label{eq:dbm_bwd}\begin{aligned}
 & d\RX_t = υ_M(\RX_t,𝔯)dt + σ_R(\RX_t,𝔯)dW_t,\quad t ∈ [0,τ],                           \\
 & υ_M(x,t) = -μ_R(x,t) + ∇⋅Σ_R(x,t) + Σ_R(x,t) 𝔼_Π[∇_{X_t}\log r_{t|0}(X_t|X_0)|X_t=x], \\
 & \RX_0 ∼ ϒ.
\end{aligned}\end{equation}
Then $\RM(Π(C,R_{•|0,τ}))$ satisfies $\RM_t(Π(C,R_{•|0,τ})) = \RPi_t(C,R_{•|0,τ})$ for all $t ∈ [0,τ]$.
In particular $\RM(Π(C,R_{•|0,τ}))$ transports $ϒ$ to $Γ$.
Comparing \cref{eq:sde_time_reversal} with \cref{eq:dbm_bwd} reveals that the only difference between the SGM model and the corresponding BDBM transport is the measure with respect to which the expectation in the drift adjustment factor is taken: $Π$ instead of $R$.

\noindent
\begin{minipage}[t]{.48\textwidth}
\begin{algorithm}[H]
\caption{IDBM}\label{alg:idbm}
\begin{algorithmic}[1]
\Input{$Γ,ϒ,R_{•|0,τ},C^{(0)},n$}
\Output{$\{M^{(i)}\}_{i=1}^n$}
\For{$i = 1,…,n$}
\State{$Π^{(i)} ← Π(C^{(i-1)},R_{•|0,τ})$}
\State{$M^{(i)} ← M(Π^{(i)})$}
\State{$C^{(i)} ← M^{(i)}_{0,τ}$}
\EndFor%
\end{algorithmic}
\end{algorithm}
\vspace{1em}
\end{minipage}

We consider the iterated application of the DBM transport, specified by \cref{alg:idbm}.
Starting from an initial coupling $C^{(0)}$, the initial-terminal distribution of the DBM transport of each iteration $i$ is employed to construct the diffusion bridge mixture of iteration $i + 1$.
As the BDBM transport is the time reversal of the DBM transport, any iteration $i$ of \cref{alg:idbm} can be equivalently formulated on the reverse timescale, resulting in $\RPi^{(i)}$, $\RM^{(i)}$ and $\RC^{(i)}$.
In \cref{thm:idbm_convergence} we establish the convergence properties of \cref{alg:idbm}.

\begin{figure}
\centering
\includegraphics[width=0.55\linewidth]{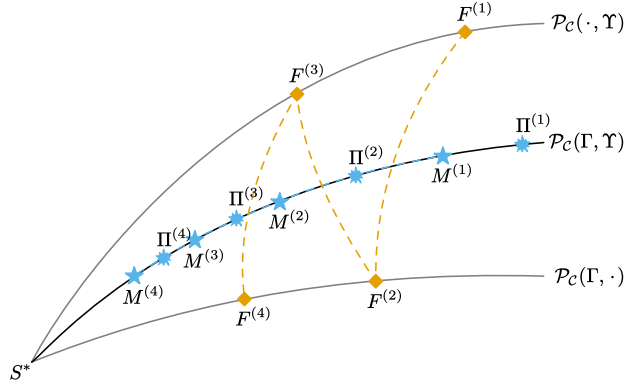}
\caption{Conceptual representation of IPF and IDBM iterations (inspired by Figure 1 of \citet{bernton2019schrodinger}), the ordering of $Π^{(i)}$ and $M^{(i)}$ is justified by \cref{thm:idbm_convergence}.}\label{fig:sketch}
\end{figure}

\begin{restatable}[IDBM convergence]{theorem}{residbmconvegence}\label{thm:idbm_convergence}
For $Γ,ϒ ∈ 𝒫_1$, $R ∈ 𝒫_𝒞$ associated to \cref{eq:sde_ref} with $σ_R(x,t) = I$, consider the iterates $Π^{(i)}, M^{(i)} ∈ 𝒫_𝒞(Γ,ϒ)$ of \cref{alg:idbm}.
Assume that\footnote{As in the proof, we denote $Π(C) ≔ Π(C,R_{•|0,τ})$, $M(C) ≔ M(Π(C,R_{•|0,τ}))$, $S^* ≔ S^*(Γ,ϒ,R,𝒫_𝒞)$.}: (i) $D_{KL}(C^{(0)} \TO S^*_{0,τ}) < ∞$; (ii) for each $i ≥ 1$ the Cameron-Martin-Girsanov theorem hold for $M^{(i)}$ yielding $dM^{(i)}/dS^*$ (implying that \cref{eq:dbm_fwd} for $Π = Π^{(i)}$ has a unique diffusion solution with law $M^{(i)}$, and that $M^{(i)} ≪ S^*$).
Then: (i) $Π^{(i)} \overset{ℒ}{⟶} S^*$ and $M^{(i)} \overset{ℒ}{⟶} S^*$ as $i → ∞$, where $\overset{ℒ}{⟶}$ denotes converge in law; (ii) $D_{KL}(Π^{(i)} \TO S^*) ≥ D_{KL}(M^{(i)} \TO S^*) ≥ D_{KL}(Π^{(i+1)} \TO S^*)$ for $i ≥ 1$ (implying that $D_{KL}(M^{(i)} \TO S^*)$ is non-increasing in $i$); (iii) $D_{KL}(Π(C) \TO S^*) = D_{KL}(M(C) \TO S^*)$ if and only if $Π(C) = M(C) = S^*$.
\end{restatable}

In generative modeling applications, the simplest choice is to set $C^{(0)}$ to the independent coupling $Γ{⊗}ϒ$, from which samples are trivially obtainable.
Dependent couplings are the natural choice in other applications.
In inverse problems, $C^{(0)}$ would be the distribution of pairs of clean and corrupted (or latent and partially observed) data.
Applications of the DBM transport to inverse problems and to aligned data are reviewed in \cref{sec:related_work}.
We briefly consider dependent initial couplings in \cref{sec:application_gaussian}.

We still need to address the problem of inferring the drift coefficients of \cref{eq:dbm_fwd} or \cref{eq:dbm_bwd} for each iteration $i$ of \cref{alg:idbm}.
The projection property of conditional expectations once again yields suitable objectives for the drift adjustments:
\begin{equation}\label{eq:dbm_drift_inference}\begin{aligned}
 & 𝔼_{Π}[Σ_R(X_t,t)∇_{X_t}\log r_{τ|t}(X_τ|X_t)|X_t=x] = \argmin_{α(x,t)}𝕃_{\mathrm{DBM}}(α,Π,R,Σ_R),             \\
 & 𝕃_{\mathrm{DBM}}(α,Π,R,Σ_R) ≔ 𝔼_{t ∼ 𝒰(0,τ)}[ϱ_t 𝔼_Π[‖α(X_t,t) - Σ_R(X_t,t)∇_{X_t}\log r_{τ|t}(X_τ|X_t)‖^2]],  \\
 & 𝔼_Π[Σ_R(X_t,t)∇_{X_t}\log r_{t|0}(X_t|X_0)|X_t=x] = \argmin_{α(x,t)}𝕃_{\mathrm{BDBM}}(α,Π,R,Σ_R),              \\
 & 𝕃_{\mathrm{BDBM}}(α,Π,R,Σ_R) ≔ 𝔼_{t ∼ 𝒰(0,τ)}[λ_t 𝔼_Π[‖α(X_t,t) - Σ_R(X_t,t)∇_{X_t}\log r_{t|0}(X_t|X_0)‖^2]].
\end{aligned}\end{equation}
In \cref{eq:dbm_drift_inference}, $ϱ_t: (0,τ) → ℝ_{>0}$ and $λ_t: (0,τ) → ℝ_{>0}$ are time-dependent regularizers that compensates for the diverging $∇_x\log r_{τ|t}(y|x)$ as $t → τ$ and $∇_y\log r_{t|0}(y|x)$ as $t → 0$.

Although we rely exclusively on \cref{eq:dbm_drift_inference} in the numerical experiments of \cref{sec:applications}, we can leverage additional inferential objectives.
Indeed, under mild conditions, performing inference following the description in \cref{sec:inference_simulation} on data simulated from $Π$ allows the recovery of the drift coefficients $μ_M(x,t)$ and $υ_M(x,t)$.
For instance, plugging-in the representation \cref{eq:dbm_fwd_x0} in \cref{eq:girsanov} yields $μ_M(x,t) = \argmax_{α(x,t)}𝕆_{\mathrm{MLE}}(α,Π,Σ_R)$, while that $μ_M(x,t) ≈ \argmin_{α(x,t)}𝕃_{\mathrm{DM}}(α,Π,Δt)$ follows directly from the definition of $Π$ by interchanging limits.
Proofs are omitted here.

We take a moment to provide several comments on the proposed IDBM procedure, contrasting it with the IPF procedure detailed in \cref{sec:ipf_diffusion_approach}:
\begin{enumerate}[label= (\roman*) ]
\item the IPF iterations generate a sequence of initial-terminal measures $F^{(i)}_{0,τ} ∈ 𝒫_2(Γ,□) ∪ 𝒫_2(□,ϒ)$ that matches one of the target marginal distributions $Γ$, $ϒ$ at a time, thus producing a valid coupling between $Γ$ and $ϒ$ only in the limit where $\StaSB$ is solved; in contrast, each IDBM iteration produces a valid initial-terminal coupling $C^{(i)} ∈ 𝒫_2(Γ,ϒ)$, with the sequence of coupling being progressively optimal toward solving $\StaSB$; this crucial difference is depicted in the sketch of \cref{fig:sketch};
\item IPF iterations necessarily alternates between forward and backward time directions; each IDBM iterations can be solved in either of the time direction (or both);
\item approaches relying on samples from $F^{(i-1)}$ to infer the drift coefficient of $F^{(i)}$ suffer from the simulation-inference distribution mismatch $\RF^{(i-1)} ≠ F^{(i)}$ which can negatively impact inferential efficiency; by construction $Π^{(i)}$ and $M^{(i)}$ share the same marginal distributions, so samples from $Π^{(i)}$ are guaranteed to cover regions of high probability according to $M^{(i)}$, thus making inference of the corresponding drift coefficient reliable;
\item the measures resulting from the IPF iterations do not admit a simple marginal-conditional decomposition; in contrast each $Π^{(i)}$ is given by a mixture of diffusion bridges.
\end{enumerate}
The aforementioned points suggest that using the IDBM can be advantageous within the context of generative models.
Point (i) establishes that truncating the IDBM iterations at a finite iteration number does not bias the transport.
Even the first iteration of the IDBM procedure produces a valid, even if suboptimal, transport.
In \cref{sec:application_score_generative} we follow this program as an alternative to the SGM approach.
In contrast, the IPF procedure requires employing many iterations, which can be computationally expensive.
Nevertheless, when multiple IDBM iterations are desirable, it might prove beneficial to solve the iterations alternating between the forward and backward time directions.
As the approximation and discretization errors cumulate over the iterates, repeatedly solving the IDBM procedure in the same time direction could lead to a terminal distribution progressively deviating from the corresponding target marginal distribution.
Points (iii) and (iv) point to potential efficiency gains of the IDBM procedure.
Indeed, at iteration $i > 1$, inferring the drift coefficients of both the IPF and the IDBM requires the numerical discretization and simulation from an SDE associated to iteration $i - 1$.
This is computationally expensive: in \citet{bortoli2021diffusion} paths are sampled, cached, and re-used for 100 steps of gradient descent.
With this choice, path sampling still amounts to roughly $50\%$ of the total computational time.
In the IDBM procedure, simulated paths, which are similarly costly to produce, give raise to samples from $C^{(i-1)}$.
However, this time, each sample from $C^{(i-1)}$ can be used to generate multiple samples at arbitrary time points from $Π^{(i)} = C^{(i-1)}R_{•|0,τ}$ which, for SDEs typically used in generative modeling (\cref{sec:sde_class}), can be done inexpensively and exactly (see \cref{eq:sde_class_birdge_td}).
We implement this approach in the empirical comparison of \cref{sec:application_matching}.

\section{Reference SDE Class}\label{sec:sde_class}

In this section we introduce a class of reference SDEs.
This is achieved in two steps: first we formulate a simple SDE, \cref{eq:sde_class_start}, then we show that SDEs commonly employed in generative models such as \citet{song2021scorebased}, i.e. SDE \cref{eq:sde_class}, are realized through a time change.

Consider the $d$-dimensional linear SDE
\begin{equation}\label{eq:sde_class_start}\begin{aligned}
 & dY_t = -α Y_t dt + Σ^{1/2} dW_t,\quad t ≥ 0, \\
 & Y_0 ∼ Γ,
\end{aligned}\end{equation}
with associated path PM $P$, where $α≥0$ is a scalar and $Σ$ is a positive definite covariance matrix.
For $α=0$, \cref{eq:sde_class_start} yields a correlated and scaled Brownian motion, otherwise \cref{eq:sde_class_start} corresponds to an Ornstein-Uhlenbeck process.
For $α=\frac{1}{2}$, \cref{eq:sde_class_start} has stationary distribution $𝒩_d(0,Σ)$.
The transition densities $p_{t|s}$ of \cref{eq:sde_class_start} are Gaussian:
\begin{equation}\label{eq:sde_class_a_v}\begin{aligned}
 & (Y_t|Y_s) ∼ 𝒩_d(Y_s a(s,t),Σ v(s,t)),                                    \\
 & a(s,t) = e^{-α(t - s)},                                                  \\
 & v(s,t) = \begin{cases*}
            t - s,                            & if $α = 0$, \\
            \frac{1}{2α}(1 - e^{-2α(t - s)}), & if $α > 0$,
            \end{cases*}
\end{aligned}\end{equation}
and
\begin{align}
 & ∇_{y_s}\log p_{t|s}(y_t|y_s) = Σ^{-1} \bigg(\frac{y_t}{a(s,t)} - y_s\bigg)\frac{a^2(s,t)}{v(s,t)},\label{eq:sde_class_x_score_s} \\
 & ∇_{y_t}\log p_{t|s}(y_t|y_s) = Σ^{-1} \bigg(y_s a(s,t) - y_t \bigg)\frac{1}{v(s,t)},\label{eq:sde_class_x_score_t}
\end{align}
which shows that the diffusion bridge \cref{eq:bridge} corresponding to \cref{eq:sde_class_start} remains a linear SDE.\@
From Bayes theorem and the Markov property, for $0 ≤ s < t < u$,
\begin{equation}\label{eq:sde_class_birdge_td}\begin{aligned}
 & (Y_t|Y_s,Y_u) ∼ 𝒩_d(Y_s \hat{a}(s,t,u) + Y_u \check{a}(s,t,u),\,Σ \tilde{v}(s,t,u)), \\
 & \hat{a}(s,t,u) = \frac{v(t,u)a(s,t)}{v(s,t)a^2(t,u)+v(t,u)},                         \\
 & \check{a}(s,t,u) = \frac{v(s,t)a(t,u)}{v(s,t)a^2(t,u)+v(t,u)},                       \\
 & \tilde{v}(s,t,u) = \frac{v(s,t)v(t,u)}{v(s,t)a^2(t,u)+v(t,u)}.
\end{aligned}\end{equation}

We consider the following time change of \cref{eq:sde_class_start}.
Let $β_t:[0,τ] → ℝ_{>0}$ be a continuous strictly positive function and $b_t ≔ ∫_0^t β_u du$.
Then $b_t:[0,τ] → [0,b_τ]$ is differentiable strictly increasing function and $β_t = \frac{db_t}{dt}$.
An application of \citep[Theorem 8.5.1]{oksendal2003stochastic} establishes that the time-changed process $X_t ≔ Y_{b_t}$ is equivalent in law to the solution to
\begin{equation}\label{eq:sde_class}\begin{aligned}
 & dX_t = -α β_t X_t dt + \sqrt{β_t}Σ^{1/2} dW_t,\quad t∈[0,τ], \\
 & X_0 ∼ Γ.
\end{aligned}\end{equation}
That is, SDE \cref{eq:sde_class} corresponds to the evolution of the simpler SDE \cref{eq:sde_class_start} under a non-linear time wrapping where time flows with instantaneous intensity $β_t$.

We conclude this section by specializing \cref{eq:sde_time_reversal,eq:dbm_fwd,eq:dbm_bwd} to the case of a reference SDE given by $dX_t = σdW_t$, $σ > 0$, referring to \cref{app:sde_class_extra} for the general setting.
The generative time reversal process \cref{eq:sde_time_reversal} is given by
\begin{equation*}\begin{aligned}
 & d\RX_t = \frac{𝔼_R[\RX_τ|\RX_t] - \RX_t}{τ - t}dt + σdW_t,\quad t ∈ [0,τ], \\
 & \RX_0 ∼ ϒ.
\end{aligned}\end{equation*}
The DBM transport \cref{eq:dbm_fwd} is given by
\begin{equation}\label{eq:sde_dbm_simple}\begin{aligned}
 & dX_t =  \frac{𝔼_{Π}[X_τ|X_t] - X_t}{τ - t}dt + σdW_t,\quad t ∈ [0,τ], \\
 & X_0 ∼ Γ,
\end{aligned}\end{equation}
while the BDBM transport \cref{eq:dbm_bwd} is given by
\begin{equation}\label{eq:sde_bdbm_simple}\begin{aligned}
 & d\RX_t = \frac{𝔼_{Π}[\RX_τ|\RX_t] - \RX_t}{τ - t}dt + σdW_t,\quad t ∈ [0,τ], \\
 & \RX_0 ∼ ϒ.
\end{aligned}\end{equation}
In this case the DBM and BDBM transports are symmetric: the BDM transport based on $C ∈ 𝒫_2(Γ,ϒ)$ is equivalent in law to the BDBM transport based on $\RC ∈ 𝒫_2(ϒ,Γ)$.

\section{Additional Related Literature}\label{sec:related_work}

The DBM transport is initially introduced in the unpublished manuscript \citep{peluchetti2021nondenoising}, which, however, lacks empirical validations.
This shortcoming is subsequently addressed by \citet{wu2022diffusionbaseda,liu2023learning}, who successfully implements the DBM transport across a multitude of applications, among other significant contributions.
\citet{liu2023learning} demonstrates that the DBM transport corresponds to an optimal Markovianization of the mixture of diffusion bridges it matches.
The theoretical developments associated with this finding contribute to our proof of \cref{thm:idbm_convergence}.
The proposal put forth by \citet{liu2023sb} is equivalent to the DBM transport for a reference scaled Brownian motion, and its application is shown to yield competitive results in image restoration problems.
Similarly, the proposal of \citet{somnath2023aligned} is equivalent to the DBM for the same reference dynamics.
It also assumes that the initial coupling is optimal, and thus preserved.
The resulting methodology is successfully applied to the case of aligned data.
While both works are equivalent to a specific instance of the DBM transport, they differ in certain aspects of implementation, such as the choice of the discretization scheme for sampling SDE paths and the definition of the loss regularizer.
These studies further substantiate the satisfactory performance we observe in the dataset transfer experiment of \cref{sec:application_matching} for the BDBM transport.

Concurrently and independently of our research, \citet{shi2023diffusion} introduces the \emph{Diffusion Schrödinger Bridge Matching---Iterative Markovian Fitting (DSBM-IMF)} procedure, which is equivalent to the IDBM procedure developed in this work.
\citet{shi2023diffusion} conducts a preliminary theoretical assessment of the convergence properties of the \emph{DSBM-IMF} procedure and empirically compares it with the proposal of \citet{bortoli2021diffusion}.
One of the numerical experiments pertains to the Gaussian setting detailed in \cref{sec:application_gaussian}, with $Σ_0 = Σ_1 = I$ and $d=50$.
In contrast to our investigation, neural networks approximators are utilized and trained iteratively.
The subsequent results are consistent with our findings of \cref{sec:application_gaussian}.

The Rectified Flow (RF) method, introduced in \citet{liu2022flow} and further explored in \citet{liu2022rectified}, is particularly pertinent to the IDBM procedure discussed in this paper due to their significant similarities.
The RF procedure commences with an initial coupling $C ∈ 𝒫_2(Γ,ϒ)$, from which it constructs a mixing process, denoted as $Φ ≔ CL_{•|0,1}$, via the deterministic linear interpolant process $L_{•|0,1}$.
The latter is defined by $X_t ≔ (1 - t)X_0 + tX_1$, where $t ∈ [0,1]$.
A rectification of this mixing process is then introduced, given by the solution to the RF ODE
\begin{equation*}\begin{aligned}
 & dV_t = ν(V_t,t)dt,\quad t ∈[0,1], \\
 & V_0 ∼ Γ,
\end{aligned}\end{equation*}
where $ν(x,t) ≔ 𝔼_Φ[X_1 - X_0 | X_t = x]$.

It is shown in \citet{liu2022flow} that this rectification procedure results in a valid coupling: $\mathrm{Law}(V_0,V_1) ∈ 𝒫_2(Γ,ϒ)$.
Moreover, when the rectification process is iterated, it yields a sequence of couplings, $\mathrm{Law}(V^{(i)}_0,V^{(i)}_1)$, where $i ≥ 1$, such that $𝔼[κ(V^{(i)}_1 - V^{(i)}_0)]$ is non-increasing across all convex cost functions $κ: ℝ^d → ℝ$.
For additional properties of the RF procedure, we direct the reader to \citet{liu2022flow}.
Finally, \citet{liu2022rectified} establishes that while the RF method successfully solves the one-dimensional Euclidean OT problem, it does not address the multidimensional variant.
Consequently, a modification of the RF approach is introduced, which is proven to effectively solve the multidimensional Euclidean OT problem.

The RF method can be understood as the limiting case of the IDBM procedure for a reference scaled Brownian motion as its randomness vanishes.
Indeed, consider the reference measure $R$ associated to $dX_t = σdW_t$ over $t ∈ [0,1]$.
$X_t ∼ R_{t|0,1}$ is realized by $X_t = (1 - t)X_0 + tX_1 + σ \sqrt{t(1 - t)}Z_t$, where $Z_t ∼ N(0, 1)$.
The DBM drift is given by
\begin{equation*}
μ_M(x,t) = 𝔼_Π\Big[\frac{X_1 - X_t}{1 - t} \mathrel{\Big|} X_t = x\Big] = 𝔼_Π\Big[X_1 - X_0 - σ\sqrt{\frac{t}{1 - t}}Z_t \mathrel{\Big|} X_t = x\Big],
\end{equation*}
for $Π = CR_{•|0,1}$.
As $σ → 0$ the Brownian bridge's marginal variances converge to zero, while its marginal means are independent of $σ$.
Informally, $Π → Φ$ and we obtain the aforementioned limiting equivalence.
\cref{sec:application_gaussian} expands on this connection within a fully Gaussian setting.
Moreover, the conclusions drawn from the numerical experiment of \cref{sec:application_matching} indicate that the optimal value of $σ$ depends on the specific application.
In particular, our attempts to apply the RF procedure to this dataset transfer experiment have not yielded satisfactory results.

As shown in \citet{shi2023diffusion}, under certain conditions, both the Flow Matching (FM) proposed by \citet{lipman2023flow} and the Conditional Flow Matching (CFM) suggested by \citet{tong2023conditional} correspond to the first iteration of the RF for generative modeling with a standard Gaussian as the initial distribution.
The OT-CFM variant, as introduced by \citet{tong2023conditional}, initially attempts to solve the EOT problem to derive an optimal coupling, followed by fitting a stochastic process to preserve this optimal coupling.
This approach is analogous to computing the DBM transport starting from the optimal coupling.

Finally, \citet{chen2022likelihood}, while sharing similarities with the work of \citet{bortoli2021diffusion,vargas2021solving}, introduces divergence-based objectives, while \citet{thornton2022riemannian} extends the scope of the DIPF method of \citep{bortoli2021diffusion} by generalizing it to non-Euclidean settings.

\section{Applications}\label{sec:applications}

In this section we consider four applications of the IDBM procedure.

\subsection{Gaussian Transports}\label{sec:application_gaussian}

We investigate in depth the case where both the initial and terminal distributions are $d$-dimensional Gaussian distributions, $Γ = 𝒩_d(μ_0, Σ_0)$, $ϒ = 𝒩_d(μ_1, Σ_1)$, and the reference measure $R$ is associated to $dX_t = σdW_t$ over the time interval $[0,1]$.
The solution to the Euclidean EOT problem, or to $\StaSB$, defined by $(Γ,ϒ,R)$, is the Gaussian coupling
\begin{equation}\label{eq:normal_rot}
\renewcommand\arraystretch{1.3}
S^*_{0,1}(Γ,ϒ,R,𝒫_𝒞) = 𝒩_{2d}\left(\begin{bmatrix}
μ_0 \\ μ_1
\end{bmatrix},\begin{bmatrix}
Σ_0     & Σ_S(σ) \\
Σ_S(σ)ᵀ & Σ_1
\end{bmatrix}\right),
\end{equation}
where $Σ_S(σ) ≔ (Σ_0Σ_1 + \frac{σ^4}{4}I)^{1/2} - \frac{σ^2}{2}I$.
The solution $OT^*(Γ,ϒ)$ to the OT problem is obtained setting $σ = 0$ in \cref{eq:normal_rot}\footnote{Gaussian distributions with positive semi-definite but not positive definite covariance matrices are well-defined through their characteristic function.}, with corresponding OT plan $φ_{OT}(x) ≔ μ_1 + Σ_0^{-1}Σ_S(0)(x - μ_0)$.
See for instance \citet[Section 2.6]{peyre2020computational} and \citet[Section 2]{janati2020entropic}.

Consider the DBM transport based on the Gaussian coupling $C ∈ 𝒫_2(Γ,ϒ)$,
\begin{equation}\label{eq:normal_dbm_c}
\renewcommand\arraystretch{1.3}
C = 𝒩_{2d}\left(\begin{bmatrix}
μ_0 \\ μ_1
\end{bmatrix},\begin{bmatrix}
Σ_0  & Σ_C \\
Σ_Cᵀ & Σ_1
\end{bmatrix}\right).
\end{equation}
The mixture of diffusion bridges with law $Π(C,R_{•|0,1})$ has a joint distribution $Π_{0,t,1}(C,R_{•|0,1}) $ which is again Gaussian,
\begin{equation*}
\renewcommand\arraystretch{1.3}
Π_{0,t,1}(C,R_{•|0,1}) = 𝒩_{3d}\left(\begin{bmatrix}
μ_0 \\ (1 - t)μ_0 + t μ_1 \\ μ_1
\end{bmatrix},\begin{bmatrix}
Σ_0        & Σ_{Π;0,t}  & Σ_C       \\
Σ_{Π;0,t}ᵀ & Σ_{Π;t,t}  & Σ_{Π;t,1} \\
Σ_Cᵀ       & Σ_{Π;t,1}ᵀ & Σ_1
\end{bmatrix}\right),
\end{equation*}
where $Σ_{Π;0,t} ≔ (1 - t)Σ_0 + tΣ_C$, $Σ_{Π;t,t} ≔ (1 - t)^2Σ_0 + t^2Σ_1 + t(1 - t)(Σ_C + Σ_Cᵀ + σ^2I)$ and $Σ_{Π;t,1} ≔ (1 - t)Σ_C + t Σ_1$.
It follows that
\begin{equation*}
𝔼_Π[X_1|X_t] = μ_1 + Σ_{Π;t,1}ᵀ Σ_{Π;t,t}^{-1}(X_t - μ_t)
\end{equation*}
and thus the DBM transport with law $M(Π(C,R_{•|0,1}))$ is given by the solution to
\begin{equation}\label{eq:normal_dbm_sde}\begin{aligned}
 & dX_t =  \frac{𝔼_{Π}[X_1|X_t] - X_t}{1 - t}dt + σdW_t,\quad t ∈ [0,1], \\
 & X_0 ∼ Γ.
\end{aligned}\end{equation}
SDE \cref{eq:normal_dbm_sde} is of the form $dX_t = (A_t X_t + b_t)dt + σdW_t$ for $A_t :[0,1] → ℝ^{d × d}$, $b_t: [0,1] → ℝ^d$, i.e.\ it is linear and time-inhomogenous with Gaussian transition probabilities.
The following representation holds: $X_1 | X_0$ is distributed as $P_1 X_0 + ε$, where $P_t$ is given by the solution to the matrix-valued ODE
\begin{equation}\label{eq:normal_dbm_fundamental}\begin{aligned}
 & dP_t = A_t P_t,\quad t ∈ [0,1], \\
 & P_0 = I,
\end{aligned}\end{equation}
and $ε$ is a $d$-dimensional Gaussian distribution whose parameters depend on the functions $A_t$ and $b_t$, but not on $X_0$.
In conclusion,
\begin{equation}\label{eq:normal_dbm_cp}
\renewcommand\arraystretch{1.3}
M_{0,1}(Π(C,R_{•|0,1})) = 𝒩_{2d}\left(\begin{bmatrix}
μ_0 \\ μ_1
\end{bmatrix},\begin{bmatrix}
Σ_0     & Σ_{C'} \\
Σ_{C'}ᵀ & Σ_1
\end{bmatrix}\right),\quad Σ_{C'} ≔ Σ_0 P_1ᵀ.
\end{equation}
Starting from the Gaussian coupling $C^{(0)} = Γ{⊗}ϒ$, i.e. $Σ_{C^{(0)}} = 0I$, the IDBM procedure iteratively computes the updates $Σ_C → Σ_{C'}$ resulting in a sequence of Gaussian couplings $C^{(i)}$.

The IPF updates for $\StaSB$ can be computed analytically thanks to the following property of Gaussian distributions.
Let $X$ and $Y$ be $d$-dimensional random variables with $(X,Y) ∼ P_{X,Y}$, where
\begin{equation*}
\renewcommand\arraystretch{1.3}
P_{X,Y} ≔ 𝒩_{2d}\left(\begin{bmatrix}
μ_x \\ μ_y
\end{bmatrix},\begin{bmatrix}
Σ_{xx}  & Σ_{xy} \\
Σ_{xy}ᵀ & Σ_{yy}
\end{bmatrix}\right),
\end{equation*}
and denote with $P_{Y|X}$ the conditional distribution of $Y$ given $X$ under $P_{X,Y}$.
For $Q_X ≔ 𝒩_d(λ_x,Γ_{xx})$, we have
\begin{equation}\label{eq:normal_ipf_update}\begin{aligned}
\renewcommand\arraystretch{1.3}
 & Q_X P_{Y|X} = 𝒩_{2d}\left(\begin{bmatrix}
                             λ_x \\ λ_y
                             \end{bmatrix},\begin{bmatrix}
                                           Γ_{xx}  & Γ_{xy} \\
                                           Γ_{xy}ᵀ & Γ_{yy}
                                           \end{bmatrix}\right),                    \\
 & λ_y ≔ μ_y + Σ_{xy}ᵀ Σ_{xx}^{-1}(λ_x - μ_x),                                      \\
 & Γ_{xy} ≔ Γ_{xx} Σ_{xx}^{-1} Σ_{xy},                                              \\
 & Γ_{yy} ≔ Σ_{yy} + Σ_{xy}ᵀ (Σ_{xx}^{-1} Γ_{xx} Σ_{xx}^{-1} - Σ_{xx}^{-1}) Σ_{xy}. \\
\end{aligned}\end{equation}
Therefore, starting from
\begin{equation*}
\renewcommand\arraystretch{1.3}
F^{(0)}_{0,1} = 𝒩_{2d}\left(\begin{bmatrix}
μ_0 \\ μ_0
\end{bmatrix},\begin{bmatrix}
Σ_0 & Σ_0        \\
Σ_0 & Σ_0 + σ^2I
\end{bmatrix}\right),
\end{equation*}
the IPF iterations update one of the marginal distributions of $F^{(i)}_{0,1}$ at a time via \cref{eq:normal_ipf_update}.

When $d=1$, ODE \cref{eq:normal_dbm_fundamental} can be solved analytically.
The coupling distribution \cref{eq:normal_dbm_c} has a single degree of freedom, its correlation coefficient, which we denote with $ρ_C$.
Then, $M_{0,1}(Π(C,R_{•|0,1})) ∈ 𝒫_2(Γ,ϒ)$ is Gaussian with correlation coefficient
\begin{equation}\label{eq:normal_dbm_corr}\begin{aligned}
 & ρ_M(ρ_C,Σ_0,Σ_1,σ) ≔ \exp\left\{-σ^2\frac{\tanh^{-1}(\frac{c_1}{c_3}) + \tanh^{-1}(\frac{c_2}{c_3})}{c_3}\right\} \\
 & c_1 = σ^2 + 2Σ_1(ρ_C Σ_0 - Σ_1), c_3 = \sqrt{(σ^2 + 2 (ρ_C + 1) Σ_0 Σ_1)(σ^2 + 2 (ρ_C - 1) Σ_0 Σ_1)},             \\
 & c_2 = σ^2 + 2Σ_0(ρ_C Σ_1 - Σ_0),
\end{aligned}\end{equation}
which is independent of $μ_0,μ_1$.
The following results hold: (i) for fixed $σ,Σ_0,Σ_1$, the map $ρ_M(ρ,Σ_0,Σ_1,σ) : ρ ↦ ρ'$ is a contraction over $ρ ∈ [-1,1]$ with limiting value $(\sqrt{Σ_0 Σ_1 + \frac{σ^4}{4}} - \frac{σ^2}{2})/\sqrt{Σ_0Σ_1}$, i.e.\ the correlation coefficient of $S^*_{0,1}(Γ,ϒ,R,𝒫_𝒞)$; (ii) for fixed $ρ,Σ_0,Σ_1$, $\lim_{σ->0}ρ_M(ρ,Σ_0,Σ_1,σ) = ρ_M(ρ,Σ_0,Σ_1,0) = 1$, i.e.\ the correlation coefficient of $OT^*(Γ,ϒ)$.
Result (i) establishes that the couplings produced by the IDBM iterations converge to $S^*_{0,1}(Γ,ϒ,R,𝒫_C)$, as expected from \cref{thm:idbm_convergence}.
Result (ii) shows that the DBM transport remains well-posed for vanishing regularization $σ$, and that $M^{(1)}_{0,1} \overset{ℒ}{⟶}OT^*(Γ,ϒ)$ as $σ→0$, i.e.\ convergence (in law) is attained by the first iteration of the IDBM procedure.
For further insight, reconsider the DBM SDE $dX_t = (A_t X_t + b_t)dt + σdW_t$.
In the noiseless limit $σ → 0$, the dynamics are specified by the DBM ODE $dx_t = (A^0_t x_t + b^0_t)dt$, which corresponds to the RF ODE (\cref{sec:related_work}), where $A^0_t ≔ \lim_{σ → 0}A_t$ and $b^0_t ≔ \lim_{σ → 0}b_t$.
Moreover, the solution $M$ to the DBM SDE converges in law to the solution to the DBM ODE \citep[Chapter 11]{stroock2006multidimensional}.
Solving the DBM ODE and computing the solution at terminal time yields $x_1 = μ_1 + \sqrt{Σ_1/Σ_0}(x_0 - μ_0) = φ_{OT}(x_0)$, the OT plan.

\vspace{1em}
\begin{center}
\begin{tabularx}{\textwidth}{*{2}{>{\centering\arraybackslash}X}}
\centering
\includegraphics[valign=m]{figure/gaussian_rot_dbm_ipf.pdf}
 &
\resizebox{7.4cm}{!}{
\begin{tabular}{lll}
\toprule
$\lim$                & $D_{KL}(M^{(1)}_{0,1} \TO S^*_{0,1})$\footnote{$D_{KL}(M^{(1)}_{0,1} \TO S^*_{0,1})$ is independent of $μ_{\{0,1\}}$.} & $D_{KL}(F^{(1)}_{0,1} \TO S^*_{0,1})$ \\ \midrule
$μ_{\{0,1\}} → \pm ∞$ & $k$                                                                                                                    & $∞$                                   \\
$Σ_{\{0,1\}} → ∞$     & $K_Π$                                                                                                                  & $∞$                                   \\
$Σ_{\{0,1\}} → 0$     & $0$                                                                                                                    & $k$                                   \\
$σ → ∞$               & $0$                                                                                                                    & $0$                                   \\
$σ → 0$               & $k$                                                                                                                    & $∞$                                   \\ \bottomrule
\end{tabular}} \\
\captionof{figure}{Convergence of IPF and IDBM initial-terminal PMs to $S^*_{0,1}$ in KL divergence; $d=1$; log-scale.}\label{fig:normal_dbm_ipf_1d}
 &
\captionof{table}{Summary of limiting behavior of the first iteration of the IDBM and IPF procedures; $d=1$.}\label{tab:normal_dbm_cases_1d}
\end{tabularx}
\end{center}
\vspace{-1em}

We consider the scenario $μ_0=-1, μ_1=1, Σ_0=Σ_1=σ^2=1$.
We compute the KL divergence from the IPF PMs $F^{(i)}_{0,1}$ and from the IDBM couplings $M^{(i)}_{0,1}$ to $S^*_{0,1}(Γ,ϒ,R,𝒫_𝒞)$ as function of the iterations.
The KL divergence between multivariate Gaussian distributions can be calculated analytically, so no approximation is required.
The results are shown in \cref{fig:normal_dbm_ipf_1d} where, in addition to the standard initial independent coupling $C^{(0)} = Γ{⊗}ϒ$, we consider two additional initial couplings with correlations $ρ_{C^{(0)}} = \pm 1$.
It is observed that the IDBM iterations exhibit a higher rate of convergence with respect to the KL divergence.
We note that with our choice of indexing, we slightly favor the IPF iterations, as $F^{(0)}_{0,1}$ already incorporates the reference measure $R$, while $C^{(0)}$ does not depend on $R$.

\begin{figure}[H]
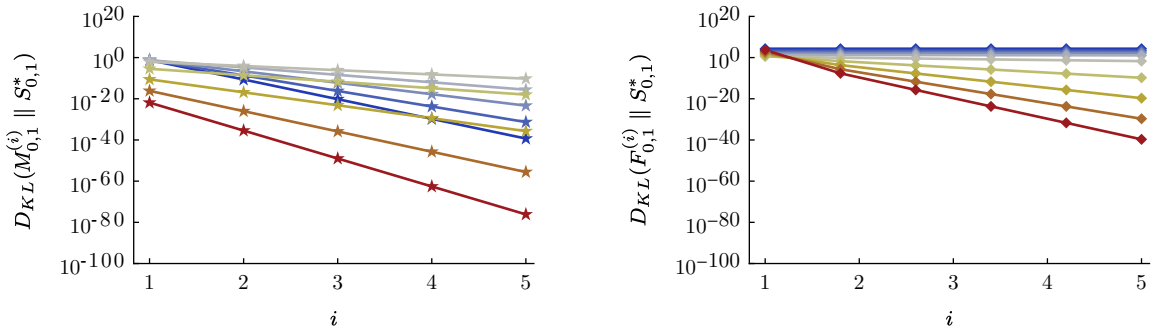

\centering
\includegraphics{figure/gaussian_rot_dbm_sigmas.pdf}
\hfill
\includegraphics{figure/gaussian_rot_ipf_sigmas.pdf}
\caption{Same as \cref{fig:normal_dbm_ipf_1d} for varying levels of regularization $σ$; (left): IDBM;\@ (right): IPF;\@ $d=1$; shared log-scale.}\label{fig:normal_dbm_ipf_sigma_1d}
\end{figure}

In \cref{fig:normal_dbm_ipf_sigma_1d} we modify the considered scenario by varying the level of regularization: $σ=10^s$ for $s=-2,…,2$, with $σ=10^{-2}$ corresponding to the blue color and $σ=10^2$ to the red color.
While the IDBM iterations converge quickly for both low and high levels of regularization (with $σ=1$, as in \cref{fig:normal_dbm_ipf_1d}, exhibiting the slowest convergence), the IPF iterations display increasing KL divergence for vanishing $σ$.

We complete the analysis of the one-dimensional setting by studying the behavior of $D_{KL}(M^{(1)}_{0,1} \TO S^*_{0,1})$ and $D_{KL}(F^{(1)}_{0,1} \TO S^*_{0,1})$ at the boundaries of the parameters' space spanned by $θ=(μ_0,μ_1,Σ_0,Σ_1,σ)$.
The results are reported in \cref{tab:normal_dbm_cases_1d}, where $k$ denotes positive finite values, $K_Π ≔ \frac{1}{4}(π + \log[4] -2(1 + \log[π]))$ is the supremum of $D_{KL}(M^{(1)}_{0,1} \TO S^*_{0,1})$ over $θ$, $μ_{\{0,1\}}$ stands for either of $μ_0$, $μ_1$, and $Σ_{\{0,1\}}$ stands for either of $Σ_0$, $Σ_1$.
When taking each limit all other parameters are kept constant.
Note that all of $M^{(1)}_{0,1}$, $F^{(1)}_{0,1}$ and $S^*_{0,1}$ depend on $θ$, even though the notation does not make this explicit.
The results of \cref{tab:normal_dbm_cases_1d} support the robustness of the IDBM procedure in the one-dimensional setting.
In particular, the IDBM procedure does not suffer from the difficulties in bridging distant distributions which are inherent to DIPF approaches \citep{vargas2021solving}.

\begin{figure}
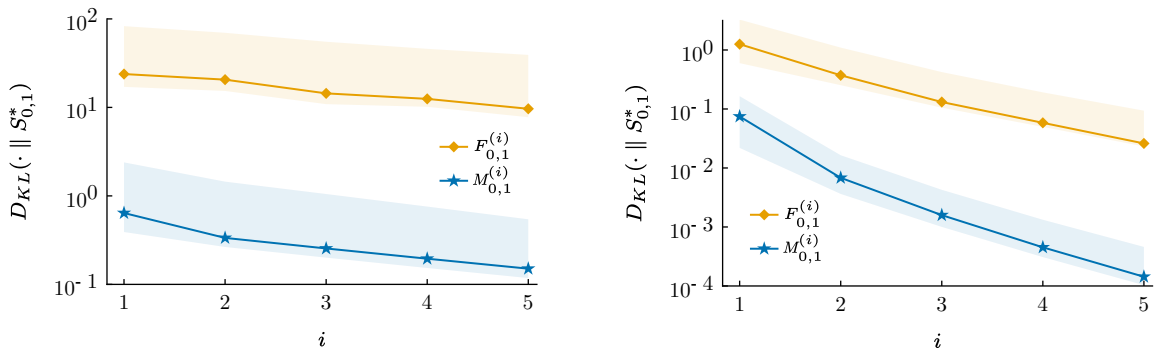

\centering
\includegraphics{figure/gaussian_rot_dbm_ipf_d5.pdf}
\hfill
\includegraphics{figure/gaussian_rot_dbm_ipf_d10.pdf}
\caption{Convergence of IPF and IDBM initial-terminal PMs to $S^*_{0,1}$ in KL divergence; (left): $d=5$; (right): $d=10$; log-scale.}\label{fig:normal_dbm_ipf_sigma_nd}
\end{figure}

For $d > 1$, we numerically integrate the matrix-valued ODE \cref{eq:normal_dbm_fundamental}.
We generate $20$ scenarios for each of $d=5,10$.
In each scenario, we sample $μ_{0},μ_1$ uniformly on $[-1,1]^d$ and $Σ_0,Σ_1$ from a Wishart distribution with $d$ degrees of freedom and covariance matrix $0.2I$ (as in \citet{janati2020entropic}).
We set $σ=0.2$ when $d=5$ and $σ=1$ when $d=10$.
In the multivariate instance, $M^{(1)}_{0,1}$ does not converge to $OT^*(Γ,ϒ)$ as $σ→0$, and the convergence rate of the IDBM procedure worsens as $σ$ approaches $0$.
Indeed, in the Gaussian setting defined for this study, the $σ→0$ limit of the DBM transport is equivalent to the RF of \citet{liu2022flow} (\cref{sec:related_work}), and \citet{liu2022rectified} establishes that the RF solves the OT problem only in the one-dimensional case.
Nonetheless, the IDBM procedure consistently outperforms the IPF procedure in all scenarios.
In \cref{fig:normal_dbm_ipf_sigma_nd} we plot the average KL divergences from $M^{(i)}_{0,1}$ and from $F^{(i)}_{0,1}$ to $S^*_{0,1}$ as function of the iterations with solid lines, where each average is over the $20$ sampled scenarios.
The banded regions correspond to the ranges between the highest and the lowest KL divergence values across the scenarios.

\subsection{Mixture Transports}\label{sec:application_mixture}

We consider $Γ = \frac{1}{3}(𝒩_1(-3, 0.2^2) + 𝒩_1(0.5, 0.2^2) + 𝒩_1(3, 0.2^2))$, $ϒ = 𝒩_1(0, 2^2)$, and $R$ associated to $dX_t = σdW_t$ over the time interval $[0,1]$ for $σ=0.2$.
This configuration is selected to illustrate the limitation of the sampling-based DIPF procedures discussed in \cref{sec:ipf_diffusion_approach}.

The goal of this application is to construct a transport from $ϒ$ to $Γ$.
This toy experiment can be seen as a simplified instance of the generative setting introduced in \cref{sec:score_generative}, where the training data consist of the values $-3,0.5,3$ and $Γ = D_η$ for $η=0.2$.
Compared with typical applications, $σ$ has been set to a small value to introduce a strong dependency between $X_0$ and $X_1$ under $R$, making SGM approaches inapplicable.

\begin{figure}
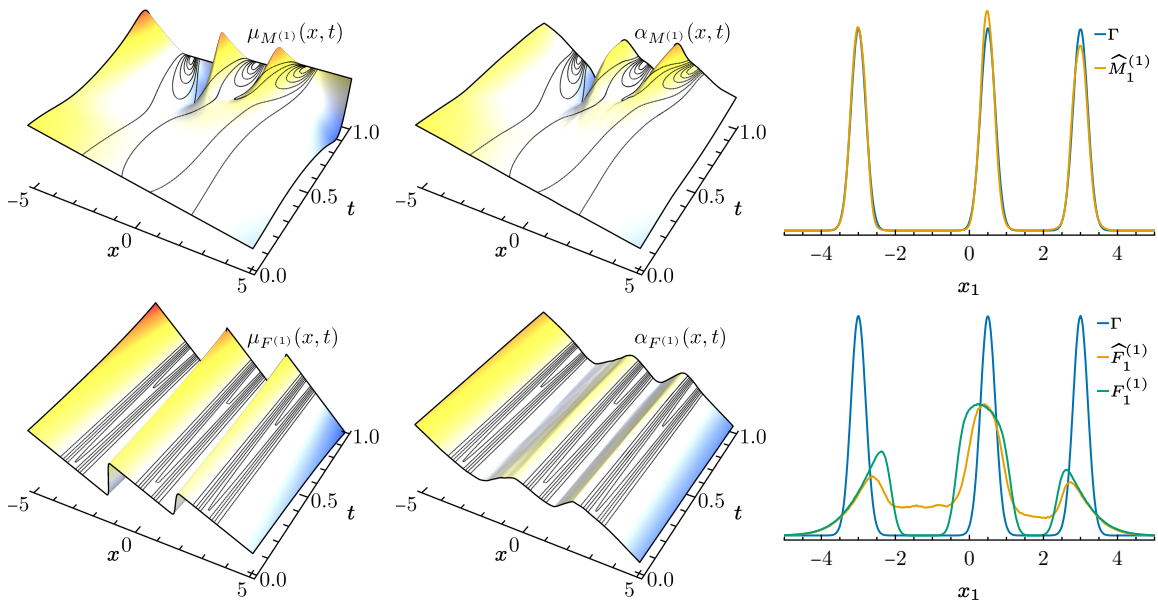

\centering
\includegraphics{figure/mixture_dbm_drift.pdf}
\hfill
\includegraphics{figure/mixture_dbm_net.pdf}
\hfill
\includegraphics{figure/mixture_dbm_terminal.pdf}
\includegraphics{figure/mixture_ipf_drift.pdf}
\hfill
\includegraphics{figure/mixture_ipf_net.pdf}
\hfill
\includegraphics{figure/mixture_ipf_terminal.pdf}
\caption{Drift (left), estimated drift (center) and terminal density (right) resulting from the first iteration of the IPF (bottom) and IDBM (top) procedures; generative time.}\label{fig:mixture_drifts}
\end{figure}

In this application, we always assume time $t$ on the generative timescale.
The first iteration of the IPF procedure relies on samples from $R$ to infer $μ_{F^{(1)}}(x,t)$.
We display $μ_{F^{(1)}}(x,t)$ in \cref{fig:mixture_drifts} (bottom-left), superimposed with gray lines representing the level sets of the marginal densities $r_t$, $t ∈ [0,1]$.
Samples from $R$ have negligible probability of falling outside narrowly defined regions.
We employ as drift approximator a fully-connected neural network $α(θ,x,t)$ with 3 hidden layers, the ReLU ($x → \max(0, x)$) activation function, and a width of $512$ units.
Utilizing objective \cref{eq:drift}, $α(θ,x,t)$ is optimized via stochastic gradient descent (SGD).
We display the resulting inferred drift coefficient at convergence, $α_{F^{(1)}}(x,t)$, in \cref{fig:mixture_drifts} (bottom-center).
Again we superimpose with the level sets of $r_t$.
The approximation is accurate in regions of high probability under $R$, but significantly deteriorates outside these regions.
We contrast the true density of $F^{(1)}_1$ with a kernel density estimate based on 10 million samples from $\widehat{F}_1^{(1)}$, which are obtained by applying the Euler scheme, $Δt = 10^{-4}$, to
\begin{equation*}\begin{aligned}
 & dX_t = α_{F^{(1)}}(X_t,t)dt + σdW_t,\quad t ∈ [0,1], \\
 & X_0 ∼ ϒ,
\end{aligned}\end{equation*}
and collecting the terminal states.
As illustrated in \cref{fig:mixture_drifts} (bottom-right), the approximation errors have a significant impact.

Implementing sampling-based DIPF procedures requires carefully tuning the level of regularization.
On the one hand, an excessively high level of $σ$ renders IPF iterations after the first superfluous, with $F^{(1)}$ already approximately solving $\DynSB$.
On the other hand, an excessively low level of $σ$ risks incurring the difficulties just exposed.
We remark that, in particular instances, the simulation-inference mismatch can become irrelevant.
This is the case for the Gaussian transports of \cref{sec:application_gaussian}, where $F^{(i)}_t$ is a multivariate Gaussian distribution for each $i$ and $t$ \citep{mallasto2022entropyregularized}.
Therefore, $μ_{F^{(i)}}(x, t)$ is an affine function in $x$ for each $i$ and $t$, in which case local information about  $μ_{F^{(i)}}(x, t)$ is global as well.

We construct the IDBM procedure started from $C^{(0)} = ϒ{⊗}Γ$.
The drift coefficient $μ_{M^{(1)}}(x,t)$ corresponding to the first IDBM iteration is illustrated in \cref{fig:mixture_drifts} (top-left), superimposed with gray lines representing the level sets of the marginal densities $π^{(1)}_t$, $t ∈ [0,1]$.
Drift inference is based on samples from $Π^{(1)}$.
The same neural network architecture is employed, and utilizing objective \cref{eq:dbm_drift_inference} $α(θ,x,t)$ is optimized via SGD.\@
We display the resulting inferred drift coefficient at convergence $α_{M^{(1)}}(x,t)$ in \cref{fig:mixture_drifts} (top-center), superimposing the level sets of $π^{(1)}_t = m^{(1)}_t$.
As expected, $α_{M^{(1)}}(x,t)$ closely approximates $μ_{M^{(1)}}(x,t)$ in regions of high probability under $Π^{(1)}$.
Crucially, this suffices for learning $μ_{M^{(1)}}(x,t)$ on the regions of the path space relevant for the simulation from $M^{(1)}$.
An interesting observation is that the drift coefficient $μ_{M^{(1)}}(x,t)$ is smoother\footnote{Both drift coefficients are analytical functions.} than $μ_{F^{(1)}}(x,t)$.
This aspect makes it easier to both approximate $μ_{M^{(1)}}(x,t)$ with a neural network and accurately simulate from the resulting SDE.\@
We contrast the true density of $Γ$ with a kernel density estimate based on 10 million samples from $\widehat{M}_1^{(1)}$, which are obtained by applying the Euler scheme, $Δt = 10^{-4}$, to
\begin{equation*}\begin{aligned}
 & dX_t = α_{M^{(1)}}(X_t,t)dt + σdW_t,\quad t ∈ [0,1], \\
 & X_0 ∼ ϒ,
\end{aligned}\end{equation*}
and collecting the terminal states.
The close agreement between $Γ$ and $\widehat{M}_1^{(1)}$ is illustrated in \cref{fig:mixture_drifts} (top-right).

\begin{figure}
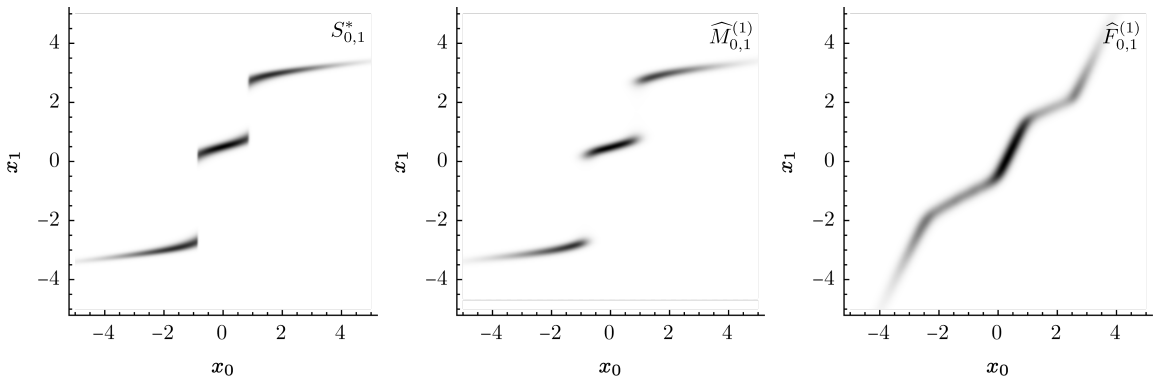

\centering
\includegraphics{figure/mixture_sb_joint.pdf}
\hfill
\includegraphics{figure/mixture_dbm_joint.pdf}
\hfill
\includegraphics{figure/mixture_ipf_joint.pdf}
\caption{Optimal coupling (left), coupling from the first iteration of the IDBM procedure (center), PM from the first iteration of the IPF procedure (right); generative time.}\label{fig:mixture_joint}
\end{figure}

We discretize $ϒ$ and $Γ$ into $5,000$ equally spaced bins on $[-5,5]$ and solve the corresponding EOT problem using the Sinkhorn algorithm from the POT library \citep{flamary2021pot}.
For $σ=0.2$, convergence is attained in around $2,000$ iterations with default tolerances.
We display the resulting 2D histogram in \cref{fig:mixture_joint} (left).
We additionally plot the 2D kernel density estimates of $\widehat{F}^{(1)}_{0,1}$ (center) and $\widehat{M}^{(1)}_{0,1}$ (right).
It can be seen that the first iteration of the IDBM procedure suffices to produce a coupling capturing the overall shape of $S^*_{0,1}$.

\subsection{Generative Modeling}\label{sec:application_score_generative}

In this application, we explore the use of the BDBM transport as an alternative to the score-based generative paradigm (\cref{sec:score_generative}) for image generation purposes.
The reference SDE is given by \cref{eq:sde_class} with $α=0$ and $Σ = I$, i.e.\ by a time-scaled $d$-dimensional Brownian motion.
We consider the CIFAR-10 dataset, hence $d = 28 {×} 28 {×} 3$.
As baseline modeling choice we consider the ``VE SDE'' parametrization from \citet{song2021scorebased} for $β_t$, and the corresponding smoothed training data distribution $D_η$,
\begin{equation}\label{eq:cifar_10_ve_sde}\begin{aligned}
 & dX_t = \sqrt{β_t}dW_t,\quad t ∈ [0,1],                                                             \\
 & X_0 ∼ D_η,                                                                                         \\
 & β_t ≔ σ_{\min}^2\Big(\frac{σ_{\max}}{σ_{\min}}\Big)^{2t} 2\log\Big(\frac{σ_{\max}}{σ_{\min}}\Big), \\
 & b_t = σ_{\min}^2\Big(\frac{σ_{\max}}{σ_{\min}}\Big)^{2t} - σ_{\min}^2,                             \\
\end{aligned}\end{equation}
where $η = σ_{\min} = 10^{-2}$ and $σ_{\max} = 50$.
We recall that $D_η ≔ \frac{1}{n}∑_{s=1}^n 𝒩_d(x_s;0, η^2I_d)$.
Here $x_1,…,x_n$ are the $50,000$ samples of CIFAR-10's training dataset $D^{\mathrm{train}}_{\mathrm{CIFAR-10}}$.
We select the NCSN++ (smaller) architecture, and training is performed using the official PyTorch implementation\footnote{\url{https://github.com/yang-song/score_sde_pytorch}.} of \citet{song2021scorebased}.
We review the SGM training procedure in \cref{alg:sgm_pseudo_code}.

\noindent
\begin{minipage}[t]{0.495\textwidth}
\begin{algorithm}[H]
\caption{SGM training}\label{alg:sgm_pseudo_code}
\begin{algorithmic}[1]
\Input{$Γ$, \hlred{$R_{t|0}$}, $∇_{y}\log r_{t|0}(y,x)$, $α(θ,x,t)$}
\Output{$α_{\mathrm{SGM}}(x,t)$}
\Repeat{}
\State{$t ∼ 𝒰(0,τ)$}
\State{$X_0 ∼ Γ$}
\State{}
\State{\hlred{$X_t ∼ R_{t|0}(□|X_0)$}}
\State{$Y_t ← ∇_{X_t}\log r_{t|0}(X_t|X_0)$}
\State{$ℒ ← \big\lVert Y_t - α(θ,X_t,t)\big\rVert^2 λ_t$}
\State{$θ ← \texttt{sgdstep}(θ,ℒ)$}
\Until{convergence}
\end{algorithmic}
\end{algorithm}
\end{minipage}%
\hfill
\begin{minipage}[t]{0.495\textwidth}
\begin{algorithm}[H]
\caption{BDBM training}\label{alg:bdbm_pseudo_code}
\begin{algorithmic}[1]
\Input{$Γ$, \hlblue{$ϒ$}, \hlblue{$R_{t|0,τ}$}, $∇_{y}\log r_{t|0}(y,x)$, $α(θ,x,t)$}
\Output{$α_{\mathrm{BDBM}}(x,t)$}
\Repeat{}
\State{$t ∼ 𝒰(0,τ)$}
\State{$X_0 ∼ Γ$}
\State{\hlblue{$X_τ ∼ ϒ$}}
\State{\hlblue{$X_t ∼ R_{t|0,τ}(□|X_0,X_τ)$}}
\State{$Y_t ← ∇_{X_t}\log r_{t|0}(X_t|X_0)$}
\State{$ℒ ← \big\lVert Y_t - α(θ,X_t,t)\big\rVert^2 λ_t$}
\State{$θ ← \texttt{sgdstep}(θ,ℒ)$}
\Until{convergence}
\end{algorithmic}
\end{algorithm}
\vspace{1em}
\end{minipage}

Implementing the BDBM approach requires minimal changes, as delineated in \cref{alg:bdbm_pseudo_code}.
PyTorch implementations of SGM and BDBM losses are provided for a reference SDE described by \cref{eq:sde_class} with $Σ = I$ and for the regularizer $λ_t = v(0, t)$ in \cref{code:sgm,code:bdbm} (\cref{app:score_generative}).

By design, the reference SDE \cref{eq:cifar_10_ve_sde} strongly decouples $X_1$ from $X_0$, in which case the BDBM and SGM approaches yield similar generative models.
We set $σ_{{\max}} = 1$ and keep $ϒ = 𝒩_d(0, σ_{{\max}}^2I)$ for the generative process's initial distribution, without making efforts to optimize these hyperparameters.
Aside from the modifications to the loss function, no additional changes are introduced to the training code, and training proceeds as for the baseline SGM.\@

The trained neural network approximators obtained from \cref{alg:sgm_pseudo_code} and \cref{alg:bdbm_pseudo_code} corresponds to the drift adjustment factors appearing in \cref{eq:sde_time_reversal} and \cref{eq:dbm_bwd} as conditional expectations.
As the remaining terms in \cref{eq:sde_time_reversal} and \cref{eq:dbm_bwd} are identical, no code changes are necessary when transitioning from SGM to BDBM at generation time.

To assess the visual quality of generated samples, we employ the Fréchet Inception Distance (FID, \citet{heusel2017gans}).
FID, which aligns with human-perceived visual quality, is calculated as the 2-Wasserstein Euclidean distance in features space, where image features are derived form the Inception model architecture.
CIFAR-10 training occurs over $1,300,000$ SGD steps.
After every $50,000$ SGD steps, we compute the FID between the test portion of the CIFAR-10 dataset, $D^{\mathrm{test}}_{\mathrm{CIFAR-10}}$, and the corresponding $5,000$ samples generated from the SGM and BDBM models.
Sampling is performed using the Euler scheme \cref{eq:euler}, $Δt = 10^{-3}$.
The results are presented in \cref{fig:generative_fid}, illustrating the competitive performance of the BDBM model for this discretization interval.
Furthermore, the BDBM approach exhibits significantly accelerated training.

\begin{figure}
\centering
\includegraphics{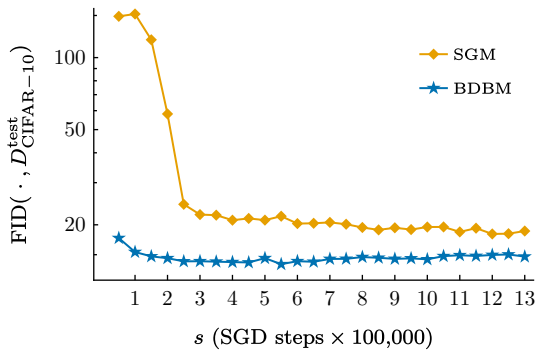}
\caption{FID between samples generated from the SGM and BDBM models and CIFAR-10 test empirical distribution at $Δt = 10^{-3}$; log-scale.}\label{fig:generative_fid}
\end{figure}

For each of the considered models and their 26 parameter states (checkpoints, at intervals of $50,000$ SGD steps), generating $5,000$ samples from either the SGM or the BDBM model using the Euler scheme with a discretization interval $Δt = 10^{-4}$ necessitates approximately one day of computation on the NVIDIA RTX 3090 GPU utilized in this experiment.
With the same discretization interval, the predictor-corrector scheme of \citet{song2021scorebased} demands twice the computation time.
As such, we have chosen to circumvent the intensive computations required to identify an ``optimal'' checkpoint through grid-search across parameter states using the predictor-corrector scheme and $Δt = 10^{-4}$, as is done in \citet{song2021scorebased}.
Our analysis is instead confined to the Euler scheme and a single ``optimal'' checkpoint is selected for both the SGM and BDBM models relying on the findings of \cref{fig:generative_fid}.
For the BDBM model, we opt for the model trained for $250,000$ steps, while for the SGM model we select the model trained for $1,200,000$ steps, which aligns with the checkpoint chosen in \citet{song2021scorebased}.
Subsequently, we calculate the FID values corresponding to $Δt = 1/25, 1/100, 1/1000$, obtaining $93.7, 14.0, 12.1$ for the BDBM model and $420.3, 18.3, 12.0$ for the SGM model, respectively.
Corresponding randomly selected samples are shown in \cref{fig:generative_grid} and contrasted with the first $16$ CIFAR-10 training samples.
Although the visual quality of the SGM and BDBM models is comparable at finer discretization steps, the BDBM model retains superior visual quality as the discretization interval increases.

\begin{figure}
\centering
\includegraphics{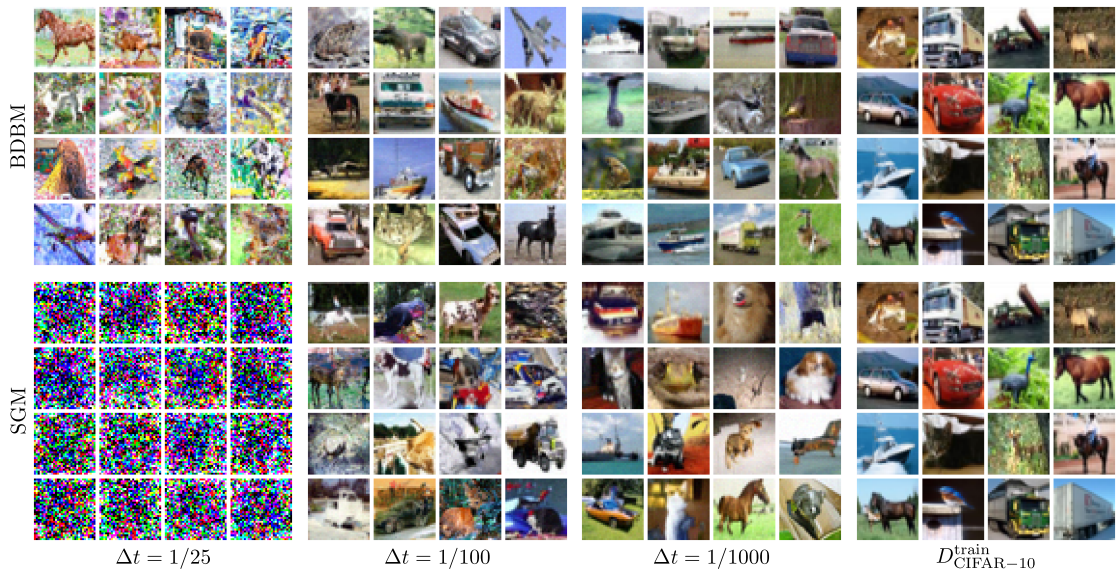}
\caption{Samples from the SGM and BDBM models with different discretization intervals, compared to samples from the CIFAR-10 training dataset.}\label{fig:generative_grid}
\end{figure}

\subsection{Dataset Transfer}\label{sec:application_matching}

In this section, we investigate an application where both the initial and the terminal distributions are complex, rendering SGM approaches inapplicable.
Following \citet{bortoli2021diffusion}, we explore the scenario where the initial distribution $Γ$ is represented by the MNIST dataset ($D_\mathrm{MNIST}$), whereas the terminal distribution $ϒ$ is derived from the first five lowercase and the first five uppercase characters, specifically, \texttt{a,…,e,A,…,E}, of the EMNIST dataset ($D_\mathrm{EMNIST}$).
Consequently, $d = 28 {×} 28$.
The reference process law $R$ is associated to $dX_t = σdW_t$ over the time interval $[0,1]$.
Our objective is to approximately solve $\DynSB$.

We evaluate the performance of the DIPF and IDBM procedures, alternating between solving the IDBM iterations in backward and forward time directions---a requirement for the DIPF procedure.
As in \citet{bortoli2021diffusion}, we utilize a lightweight configuration of the U-Net neural network architecture as proposed by \citet{dhariwal2021diffusion}.
We employ two neural network models $α(θ,x,t)$, $α(ϕ,x,t)$, which act as approximators to the drifts corresponding to the forward and backward iterations respectively.

Each of these models, $α(θ,x,t)$ and $α(ϕ,x,t)$, is associated with an independent instance of the Adam optimizer due to its adaptive nature, and a model copy ($α(\hat{θ},x,t)$ to $α(θ,x, t)$, $α(\hat{ϕ},x,t)$ to $α(ϕ,x,t)$).
The parameters of these copies, $\hat{θ}$ and $\hat{ϕ}$, are updated according to the Exponential Moving Averaging (EMA) scheme.
Models $α(\hat{θ},x,t)$ and $α(\hat{ϕ},x,t)$, whose parameters evolve more stably, are used to simulate the required SDE paths using the Euler scheme and a discretization interval of $Δt = 1/30$.
To increase efficiency, we implement caching of sampled paths.
For the DIPF algorithm, entire discretized paths are cached, while for the IDBM algorithm, only the initial and terminal values are cached.
Sampling from the reference diffusion bridge corresponding to $R$ at arbitrary time points can be performed quickly and exactly (\cref{sec:sde_class}).

We utilize the following training methods.
For the DIPF procedure, we rely on the drift matching estimator \cref{eq:drift}.
For the IDBM procedure, we employ the regression estimators \cref{eq:dbm_drift_inference}.
For the reference process $dX_t = σdW_t$, both the BDM \cref{eq:sde_dbm_simple} and BDBM \cref{eq:sde_bdbm_simple} have the same target $(X_1 - X_t)/(1 - t)$ entering the forward and backward losses in \cref{eq:dbm_drift_inference}.
Instead of using time-dependent regularizers as in \cref{sec:application_score_generative}, we limit the simulation of $t$ to $t ∼ 𝒰(0, 1 - Δt/2)$, which allows us to recover the Rectified Flow loss with $σ = 0$.

\begin{figure}
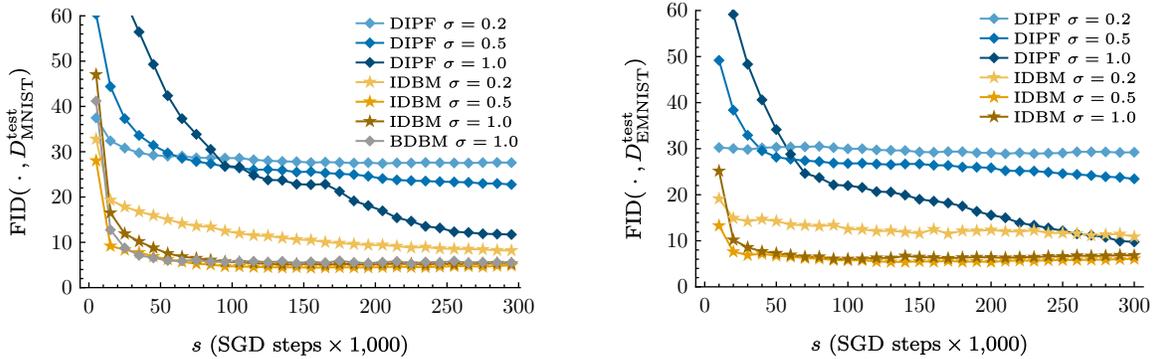

\centering
\includegraphics{figure/idbm-transfer-bwd.pdf}
\hfill
\includegraphics{figure/idbm-transfer-fwd.pdf}
\caption{IDBM and DIPF test FID values as function of SGD steps for (left): $D_\mathrm{EMNIST}^\mathrm{test}→D_\mathrm{MNIST}^\mathrm{test}$, i.e.\ backward time; (right): $D_\mathrm{MNIST}^\mathrm{test}→D_\mathrm{EMNIST}^\mathrm{test}$, i.e.\ forward time; linear-scale, truncated at $60$.}\label{fig:tansfer_bwd_fwd}
\end{figure}

Test FID values are calculated by initializing $X_0$ from $D_\mathrm{EMNIST}^\mathrm{test}$ ($D_\mathrm{MNIST}^\mathrm{test}$) and subsequently sampling the discretized path $X_t$ to obtain model samples that are approximately distributed as $D_\mathrm{MNIST}^\mathrm{test}$ ($D_\mathrm{EMNIST}^\mathrm{test}$).
The specific procedures corresponding to the IDBM and DIPF methods are as follows.
For the DIPF procedure, we remove the stochastic component from the terminal Euler discretization step as customary.
For the IDBM procedure, we employ the estimator $E_t ≔ 𝔼[X_1|X_t]$, which is directly obtained from $α(\hat{θ},x,t)$ and $α(\hat{ϕ},x,t)$.
Evaluating $E_{1 - Δt}$ is equivalent to removing the stochastic component from the terminal Euler discretization step, but we find that $E_{1 - 2 Δt}$ empirically performs better.

We consider three different levels of regularization $σ=1.0,0.5,0.2$, and compare the IDBM and DIPF procedures over $60$ iterations, with each iteration comprising $5,000$ SGD steps.
Additionally, we assess the simpler application of the BDBM procedure, which is trained for an equivalent amount of $300,000$ SGD steps.
Detailed settings of all hyperparameters defining this experiment, along with implementation details, are provided in the accompanying code implementation\footnote{\url{https://github.com/stepelu/idbm-pytorch}}.

\begin{table}
\centering
\begin{tabular}{@{}llllllll@{}}
\toprule
Procedure                             & \multicolumn{3}{c}{IDBM} & \multicolumn{3}{c}{DIPF} & \multicolumn{1}{c}{BDBM}                                 \\ \cmidrule(lr){2-4} \cmidrule(lr){5-7} \cmidrule(lr){8-8}
Regularization $σ$                    & $1.0$                    & $0.5$                    & $0.2$                    & $1.0$ & $0.5$ & $0.2$ & $1.0$ \\ \midrule
FID forward                           & 10.9                     & 6.1                      & 6.8                      & 29.2  & 23.4  & 9.8   &       \\
FID backward                          & 8.2                      & 4.9                      & 5.2                      & 27.6  & 22.8  & 11.7  & 5.3   \\
$𝕃_\mathrm{OC}(u,R^u,Σ_R)/d$ forward  & 4.0                      & 5.2                      & 12.5                     & 65.9  & 31.2  & 22.5  &       \\
$𝕃_\mathrm{OC}(u,R^u,Σ_R)/d$ backward & 3.9                      & 4.8                      & 12.5                     & 62.1  & 33.6  & 22.5  & 4.2   \\ \bottomrule
\end{tabular}
\caption{Test FID and $𝕃_\mathrm{OC}(u,R^u,Σ_R)$ values for fully trained IDBM, DIPF and BDBM procedures.}
\label{tab:tansfer_bwd_fwd}
\end{table}

The findings from this experiment are presented in \cref{tab:tansfer_bwd_fwd} and \cref{fig:tansfer_bwd_fwd}.
The IDBM procedure consistently displays superior convergence properties and lower test FID values compared to the DIPF procedure.
For both $σ=0.5$ and $σ=0.2$, the DIPF procedure fails to make significant progress, with model samples almost identical to the input samples.
Indeed, a FID value around $30$ corresponds to $\mathrm{FID}(D_\mathrm{MNIST}^\mathrm{test},D_\mathrm{EMNIST}^\mathrm{test})$.
Visualizations of trajectories of sampled paths $X_t$, and of estimators $E_t$ for the IDBM and BDBM approaches, are included in \cref{app:extra}.

The results from the IDBM procedure also illustrates a dependency on the regularization level $σ$, with performance deteriorating for the lowest regularization level $σ=0.2$.
We were unable to achieve satisfactory results for the case $σ=0$, which corresponds to the Rectified Flow method.
In this instance a valid transport is not achieved as well, with model samples strongly resembling the input samples.
We hypothesize that while the RF method offers the advantageous feature of progressively straightening the inferred paths, the ensuing inference problem may present considerable difficulty.
Indeed, the neural network model has to predict the terminal sample exactly at $t=0$ based on the initial sample, which commences to be the case at low levels of $σ$ (see $E_t$ in \cref{app:extra}).

The optimal SDE solving $\DynSB$ accomplishes a valid coupling while minimizing the drift norm functional $𝕃_\mathrm{OC}(u,R^u,Σ_R)$, as outlined in \cref{sec:reciprocal}.
Lower FID values serve as indicators of the accuracy of the inferred coupling.
As demonstrated by the results presented in \cref{tab:tansfer_bwd_fwd}, the IDBM procedure not only infers an accurate coupling but also effectively minimizes $𝕃_\mathrm{OC}(u,R^u,Σ_R)$.
Estimations of the test values for $𝕃_\mathrm{OC}(u,R^u,Σ_R)$ are computed through Monte Carlo sampling of the drift norm functional \cref{eq:sb_optimal_drift}.
In this process, the relevant neural network approximator is substituted for $μ(x,t)$.
Finally, the simpler BDBM procedure yields FID and $𝕃_\mathrm{OC}(u,R^u,Σ_R)$ values that are comparable to those produced by the more complex IDBM procedure.
This suggests that the BDBM crude approximation to $\DynSB$ might be acceptable within this context.

\section{Discussion}\label{sec:discussion}

This paper introduces a novel iterative algorithm, the IDBM procedure, aimed at solving the dynamic Schrödinger bridge problem $\DynSB$, and performs a preliminary theoretical analysis of its convergence properties.
The theoretical findings are complemented by various numerical simulations and analytical results, demonstrating the competitive performance of the IDBM procedure in comparison to the IPF procedure.

As in \citet{pavon2018datadriven,bortoli2021diffusion,vargas2021solving}, we assume that samples from the target initial and terminal distributions are either readily available, belonging to some dataset of interest, or can be generated without further approximations.
The IDBM procedure is particularly suitable in scenarios where the reference diffusion admits simple analytical transition densities.
These considerations suggest that the IDBM procedure is ideally suited for current generative applications.
Since the IDBM procedure produces a valid coupling at each iteration, we utilize the first IDBM iteration, i.e.\ the (backward) DBM transport, as an alternative to the approach of \citet{song2021scorebased}.
This alternative achieves accelerated training and superior sample quality at larger discretization intervals.
An additional advantage of this proposal is the simplicity of its implementation, which differs from the approach of \citet{song2021scorebased} only in the training loss definition.
The time-reversal sampling-based implementations of \citet{bortoli2021diffusion,vargas2021solving} suffer from the simulation-inference mismatch demonstrated in the present work, and might be difficult to scale to demanding generative applications due to the ensuing lower efficiency and higher computational cost.

The outcomes of our study give rise to additional avenues for future research.
From a theoretical perspective, our initial analysis falls short of establishing the convergence in KL divergence of the IDBM procedure's iterates to the solution to $\DynSB$.
The numerical and analytical results of \cref{sec:application_gaussian} support that, under suitable conditions, this stronger form of convergence is achieved.
Non-asymptotic results would be particularly valuable, as they could elucidate the conditions under which either the IPF or IDBM methods are more favorable.
Moreover, it would be desirable to introduce more easily verifiable conditions guaranteeing convergence of the IDBM procedure.
Finally, when $d > 1$, neither the IPF nor the IDBM procedures are robust to vanishing randomness in the reference diffusion $R$.
Consequently, the design of an iterative procedure solving $\DynSB$, while being robust to a vanishing regularization level, remains an open problem.

On the empirical side, computational constraints limited the scope of our simulation study comparing the DBM transport to the time-reversal approach of \citet{song2021scorebased}.
Specifically, no hyperparameter optimization was performed, neither for DBM-specific parameters, nor for the employed neural network architecture, while \cref{fig:generative_fid} suggests that the tested configuration might suffer from overfitting later in training.
This is conceivable, as the DBM transport approximates the solution to $\DynSB$, which is associated to an SDE that has a ``simpler'' optimal drift \citep{daipra1991stochastic}.
Since the exact solution to the DBM transport, and of the competing approaches, perfectly recovers the training data distribution, a less powerful architecture might be required for superior generalization properties.
Overfitting to the CIFAR-10 dataset is not a novel concern \citep{nichol2021improved}, and it is typically addressed through hyperparameter search.

At one extreme, when the reference process's dynamics imply a terminal value almost independent of the initial value, the DBM results in a generative model akin to that of \citet{song2021scorebased}, while rectifying the terminal distribution mismatch inherent in all time-reversal approaches \citep{bortoli2021diffusion}.
More generally, the DBM transport enlarges the space of feasible (exact) transports.
Consequently, it is plausible that improvements in current state-of-the-art solutions can be attained by empirically exploring this enlarged space, optimizing for a given metric of interest, such as FID.\@
In particular, high-resolution images pose challenges for time-reversal approaches and necessitate an increase in the reference process randomness, thus increasing the implicit integration time \citep{hoogeboom2023simple}.
The DBM transport, which precisely matches the initial and terminal marginal distributions, does not face this issue.
Lastly, computational constraints precluded an evaluation of the application of further iterations of the IDBM procedure in generative modeling applications, which would trade off increased training time with a more efficient generation process.

The approaches of \citet{bortoli2021diffusion,vargas2021solving} are directly applicable to a broad spectrum of reference SDEs.
Conversely, the IDBM procedure detailed in this work necessitates an analytically solvable reference SDE.\@
However, we remark that the IDBM methodology can potentially be extended to accommodate complex reference dynamics.
A promising approach involves harnessing the findings of \citet{debortoli2021simulating} to construct neural network approximators for $∇_x\log(r_{τ|t}(y|x))$ and $∇_y\log(r_{t|0}(y|x))$.
These approximators allow both the sampling of the reference diffusion bridge via a discretization of \cref{eq:bridge}, and the calculation of the targets in the estimators \cref{eq:dbm_drift_inference}.
An alternative strategy employs the Exact Algorithm (EA) proposed by \citet{beskos2005exact,beskos2008factorisation} to sample exactly from the reference diffusion bridge \cref{eq:bridge}.
The unavailability of $∇_x\log(r_{τ|t}(y|x))$ and $∇_y\log(r_{t|0}(y|x))$ can be circumvented by performing drift inference as outlined in \cref{sec:inference_simulation}, rather than relying on \cref{eq:dbm_drift_inference}.
We refer to the discussion following \cref{{eq:dbm_drift_inference}} for more details.
However, we must underscore that the application of the EA is restricted to reference SDEs with a ``reducible'' diffusion coefficient \citep{ait-sahalia2008closedform}, and to low-dimensional spaces due to scalability concerns \citep{peluchetti2012study}.
We refer to \citet{beskos2008factorisation} for the additional requirements on the drift coefficient of the reference SDE.\@
The empirical evaluation of these extensions is left to future research.

\newpage

\begin{appendices}
\crefalias{section}{appendix}

\section{Proofs}\label{app:proofs}

\subsection*{Proof of \cref{thm:mixture_matching}}

The following assumptions are sufficient for the application of \cref{thm:mixture_matching}.
We assume that \cref{eq:sde_mixture} and each member of \cref{eq:sde_family} admit a unique solution, with strictly positive marginal densities over $ℝ^d$.
We assume that each marginal density is the unique solution to the corresponding Fokker-Plank PDE, and that the exchanges of limits denoted with $(\star\star)$ hold.
When \cref{thm:mixture_matching} is applied to match $Π = CR_{•|0,τ}$ in \cref{sec:dbm_transports}, these conditions are satisfied if $R$ is given by \cref{eq:sde_class} and the marginal distributions of $C$ are given by mixtures of Gaussian distributions, which covers the setting of generative modelling applications.

\resmixturematching*

\begin{proof}
In this proof we make use of the following notation: for $f(x,t): ℝ^d × [0,τ] → ℝ$, ${(f(x,t))}_t ≔ \frac{d}{dt}f(x,t)$, for $f(x,t): ℝ^d × [0,τ] → ℝ^d$, ${(f(x,t))}_x ≔ \sum_{i=1}^{d}\frac{d}{dx_i}[f]_i(x,t)$, for $f(x,t): ℝ^d × [0,τ] → ℝ^d × ℝ^d$, ${(f(x,t))}_{xx} ≔ \sum_{i,j=1}^{d}\frac{d^2}{dx_i dx_j}[f]_{i, j}(x,t)$.
Let $Σ^λ(x,t) ≔ σ^λ(x,t)σ^λ(x,t)ᵀ$.
Then, for $0<t≤τ$,
\begin{align*}
 & {(π_t(x))}_t = {\left(∫_Λ p_t^λ(x) Ψ(dλ)\right)}_t                                                                                                                         \\
 & \quad = ∫_Λ{\left(p_t^λ(x)\right)}_t Ψ(dλ)\tag{$\star\star$}                                                                                                               \\
 & \quad = ∫_Λ{\left(\mu^λ(x,t)p_t^λ(x)\right)}_x + \frac{1}{2}{\left(Σ^λ(x,t)p_t^λ(x)\right)}_{xx} Ψ(dλ)                                                                     \\
 & \quad = ∫_Λ{\left(\frac{\mu^λ(x,t)p_t^λ(x)}{π_t(x)}π_t(x)\right)}_x + \frac{1}{2}{\left(\frac{Σ^λ(x,t)p_t^λ(x)}{π_t(x)}π_t(x)\right)}_{xx} Ψ(dλ)                           \\
 & \quad = {\left(∫_Λ\frac{\mu^λ(x,t)p_t^λ(x)}{π_t(x)}Ψ(dλ)π_t(x)\right)}_x + \frac{1}{2}{\left(∫_Λ\frac{Σ^λ(x,t)p_t^λ(x)}{π_t(x)}Ψ(dλ)π_t(x)\right)}_{xx}\tag{$\star\star$}.
\end{align*}
The lines denoted with $(\star\star)$ consist of exchange of limits, the third line results from the application of the Fokker-Plank PDEs for $\{P^λ\}_{λ∈Λ}$.
The Fokker-Plank PDE for \cref{eq:sde_mixture} is
\begin{equation*}
{(p_t(x))}_t = {\left(∫_Λ\frac{\mu^λ(x,t)p_t^λ(x)}{π_t(x)}Ψ(dλ)p_t(x)\right)}_x + \frac{1}{2}{\left(∫_Λ\frac{Σ^λ(x,t)p_t^λ(x)}{π_t(x)}Ψ(dλ)p_t(x)\right)}_{xx}.
\end{equation*}
As $P_0=Π_0$ and $p$ and $π$ satisfy the same Fokker-Plank PDE, it follows that $P$ and $Π$ share the same marginal distribution.
\end{proof}

\subsection*{Proof of \cref{thm:idbm_convergence}}

\residbmconvegence*

\begin{proof}
Let $P ∈ 𝒫_𝒞$ be the law of the diffusion solving
\begin{equation*}\begin{aligned}
 & dX_t = μ(X_t,t)dt + σ_R(X_t,t)dW_t,\quad t∈[0,τ], \\
 & X_0 ∼ P_0.
\end{aligned}\end{equation*}
Let $C ∈ 𝒫_2(Γ,ϒ)$.
Within this proof, we use the following abbreviations: $Π(C) ≔ Π(C,R_{•|0,τ})$, $M(C) ≔ M(Π(C,R_{•|0,τ}))$, $S^* ≔ S^*(Γ,ϒ,R,𝒫_𝒞)$.
Assume that $D_{KL}(Π(C) \TO P) < ∞$ and that the conditions required for the application of the Cameron-Martin-Girsanov theorem to obtain $dM(C)/dP$ are satisfied.
As $M(C) ≪ P$ as well, following \citet{csiszar1975divergence}, it holds that
\begin{equation*}\begin{aligned}
D_{KL}(Π(C) \TO P) - D_{KL}(Π(C) \TO M(C)) & = 𝔼_{Π(C)}\Big[\log\frac{dΠ(C)}{dP}\Big] - 𝔼_{Π(C)}\Big[\log\frac{dΠ(C)}{dM(C)}\Big] \\
                                           & = 𝔼_{Π(C)}\Big[\log\frac{dM(C)}{dP}\Big],
\end{aligned}\end{equation*}
and we want to show that
\begin{equation*}
𝔼_{Π(C)}\Big[\log\frac{dM(C)}{dP}\Big] = 𝔼_{M(C)}\Big[\log\frac{dM(C)}{dP}\Big].
\end{equation*}
As $Π_0(C) = M_0(C) = Γ$ it suffices to show that
\begin{equation*}
𝔼_{Π(C)}\Big[\log\frac{dM_{•|0}(C)}{dP_{•|0}}\Big] = 𝔼_{M(C)}\Big[\log\frac{dM_{•|0}(C)}{dP_{•|0}}\Big],
\end{equation*}
where
\begin{equation*}
\log\frac{dM_{•|0}(C)}{dP_{•|0}}(x) = ∫_0^τ[(μ_M - μ)ᵀΣ_R^{-1}](x_t,t)dx_t - \frac{1}{2} ∫_0^τ [(μ_M - μ)ᵀΣ_R^{-1}(μ_M + μ)](x_t,t) dt
\end{equation*}
is given by \cref{eq:girsanov}.
As $Π(δ_{x_0}{⊗}ϒ,R_{•|0,τ}) = Π_{•|0}(C,R_{•|0,τ})$, from \cref{eq:dbm_fwd_x0},
\begin{equation*}\begin{aligned}
 & 𝔼_{Π(C)}\Big[∫_0^τ[(μ_M - μ)ᵀΣ_R^{-1}](X_t,t)dX_t\Big]                                                                                       \\
 & \quad = 𝔼_{Π(C)}\Big[∫_0^τ[(μ_M - μ)ᵀΣ_R^{-1}](X_t,t)\big(μ_R(X_t,t) + Σ_R(X_t,t) 𝔼_{Π(C)}[∇_{X_t}\log r_{τ|t}(X_τ|X_t)|X_t,X_0]\big)dt\Big] \\
 & \quad = 𝔼_{Π(C)}\Big[∫_0^τ[(μ_M - μ)ᵀΣ_R^{-1}](X_t,t)\big(μ_R(X_t,t) + Σ_R(X_t,t) 𝔼_{Π(C)}[∇_{X_t}\log r_{τ|t}(X_τ|X_t)|X_t]\big)dt\Big],
\end{aligned}\end{equation*}
by the tower property of conditional expectations.
On the other hand, from \cref{eq:dbm_fwd},
\begin{equation*}\begin{aligned}
 & 𝔼_{M(C)}\Big[∫_0^τ[(μ_M - μ)ᵀΣ_R^{-1}](X_t,t)dX_t\Big]                                                                                    \\
 & \quad = 𝔼_{M(C)}\Big[∫_0^τ[(μ_M - μ)ᵀΣ_R^{-1}](X_t,t)\big(μ_R(X_t,t) + Σ_R(X_t,t) 𝔼_{Π(C)}[∇_{X_t}\log r_{τ|t}(X_τ|X_t)|X_t]\big)dt\Big]  \\
 & \quad = 𝔼_{Π(C)}\Big[∫_0^τ[(μ_M - μ)ᵀΣ_R^{-1}](X_t,t)\big(μ_R(X_t,t) + Σ_R(X_t,t) 𝔼_{Π(C)}[∇_{X_t}\log r_{τ|t}(X_τ|X_t)|X_t]\big)dt\Big],
\end{aligned}\end{equation*}
as $M_t(C) = Π_t(C)$ for all $t ∈ [0,τ]$.
In the same way, equality in expectations of $- \frac{1}{2} ∫_0^τ [(μ_M - μ)ᵀΣ_R^{-1}(μ_M + μ)](X_t,t) dt$ is established.
We thus obtain a version of the Pythagorean law for (reverse) KL-projections \citep{csiszar1975divergence,nielsen2018what},
\begin{equation*}
D_{KL}(Π(C) \TO P) = D_{KL}(Π(C) \TO M(C)) + D_{KL}(M(C) \TO P).
\end{equation*}
See also \citet{liu2023learning} for another derivation of this result.
When $P = S^*$ ($S^*$ is law of the diffusion solving \cref{eq:schrodinger_sde}), under the same assumptions,
\begin{equation}\label{eq:pi_to_m_ineq}
D_{KL}(Π(C) \TO S^*) ≥ D_{KL}(M(C) \TO S^*),
\end{equation}
with equality if and only if $Π(C) = M(C)$.

If $Π(C) = S^*$, then $M(C) = S^*$ as $Π(C) = CR_{•|0,τ}$ is equal in law to the diffusion process solving \cref{eq:schrodinger_sde} and \cref{thm:mixture_matching} applied to a (single) diffusion results in that same diffusion.
On the other hand, if $Π(C) ≠ S^*$, then $Π(C)$ is not a diffusion (it is an Ito process in the sense of \citet{oksendal2003stochastic}), see \cref{sec:reciprocal}, and $M(C) ≠ Π(C)$ \citep[Theorem 8.4.3]{oksendal2003stochastic}.
That is $Π(C) = M(C)$ if and only if $Π(C) = M(C) = S^*$.

For $i ≥ 0$,
\begin{equation*}\begin{aligned}
D_{KL}(Π^{(i)} \TO S^*) & = D_{KL}(Π^{(i)}_{0,τ} \TO S^*_{0,τ}) + 𝔼_{Π^{(i)}_{0,τ}}[D_{KL}(Π^{(i)}_{•|0,τ} \TO S^*_{•|0,τ})] \\
                        & = D_{KL}(Π^{(i)}_{0,τ} \TO S^*_{0,τ}),
\end{aligned}\end{equation*}
as $Π^{(i)}_{•|0,τ} = S^*_{•|0,τ} = R_{•|0,τ}$, and
\begin{equation*}\begin{aligned}
D_{KL}(M^{(i)} \TO S^*) & = D_{KL}(M^{(i)}_{0,τ} \TO S^*_{0,τ}) + 𝔼_{M^{(i)}_{0,τ}}[D_{KL}(M^{(i)}_{•|0,τ} \TO S^*_{•|0,τ})] \\
                        & ≥ D_{KL}(Π^{(i+1)}_{0,τ} \TO S^*_{0,τ}),
\end{aligned}\end{equation*}
as $M^{(i)}_{0,τ} = Π^{(i+1)}_{0,τ}$.
Thus,
\begin{equation}\label{eq:m_to_pi_ineq}
D_{KL}(M^{(i)} \TO S^*) ≥ D_{KL}(Π^{(i+1)} \TO S^*).
\end{equation}

It is assumed that $D_{KL}(Π^{(0)} \TO S^*) = D_{KL}(C^{(0)} \TO S^*_{0,τ}) < ∞$, and that the Cameron-Martin-Girsanov theorem applies to each $M^{(i)}$ yielding $dM^{(i)}/dS^*$.
Therefore, we can iteratively apply \cref{eq:pi_to_m_ineq,eq:m_to_pi_ineq} obtaining $D_{KL}(Π^{(i)} \TO S^*) ≥ D_{KL}(M^{(i)} \TO S^*) ≥ D_{KL}(Π^{(i+1)} \TO S^*)$ for $i ≥ 0$.
Being non-increasing and bounded below, the sequence $D_{KL}(Π^{(i)} \TO S^*), D_{KL}(M^{(i)} \TO S^*), D_{KL}(Π^{(i+1)} \TO S^*)$ converges, and $\lim_{i → ∞} (D_{KL}(Π^{(i)} \TO S^*) - D_{KL}(M^{(i)} \TO S^*)) = \lim_{i → ∞} D_{KL}(Π^{(i)} \TO M^{{i}}) = 0$.

The sequences $\{Π^{(i)}\}_{i ≥ 0}$, $\{M^{(i)}\}_{i ≥ 0}$ are tight.
We consider $\{Π^{(i)}\}_{i ≥ 0}$, the case of $\{M^{(i)}\}_{i ≥ 0}$ being identical.
By the conditional Jensen inequality, for any measurable set $K$,
\begin{equation*}\begin{aligned}
D_{KL}(Π^{(i)} \TO S^*) & = 𝔼_{Π^{(i)}}\Big[-\log\frac{dS^*}{dΠ^{(i)}}\mathrel{\Big|}K^c\Big]Π^{(i)}[K^c] + 𝔼_{Π^{(i)}}\Big[-\log\frac{dS^*}{dΠ^{(i)}}\mathrel{\Big|}K\Big]Π^{(i)}[K] \\
                        & ≥ -\log(S^*[K^c]/Π^{(i)}[K^c])Π^{(i)}[K^c] + -\log(S^*[K]/Π^{(i)}[K])Π^{(i)}[K].
\end{aligned}\end{equation*}
For $ε > 0$, choose $K$ compact such $S^*[K^c] < ε$, $S^*[K] ≥ 1-ε$ by the tightness of $S^*$ (as it is defined on a Polish space, see \citet{leonard2014properties}).
Assume that $Π^{(i)}$ is not tight.
That is, there is $γ > 0$ such that for each compact $K$ there is $i ≥ 0$ with $Π^{(i)}[K^c] ≥ γ, Π^{(i)}[K] < 1-γ$.
Hence, for each $ε > 0$, there is $i ≥ 0$ such that
\begin{equation*}
-\log(S^*[K^c]/Π^{(i)}[K^c])Π^{(i)}[K^c] ≥ -\log(ε/γ)γ,
\end{equation*}
which can be made arbitrarily large by a suitable small $ε > 0$.
On the other hand, $-\log(S^*[K]/x)x$ is bounded below for $x∈[0,1]$, so $-\log(S^*[K]/Π^{(i)}[K])Π^{(i)}[K]$ is bounded below.
But we know that for $i$ large enough $D_{KL}(Π^{(i)} \TO S^*)$ is bounded above, hence $Π^{(i)}$ must be tight.

Therefore, $\{Π^{(i)}\}_{i ≥ 0}$ and $\{M^{(i)}\}_{i ≥ 0}$ are relatively compact.
Each subsequence of $\{Π^{(i)}\}_{i ≥ 0}$ has a further subsequence $\{Π^{(l)}\}_{l ≥ 1}$ which converges in law: $Π^{(l)} \overset{ℒ}{⟶} Π^{(∞)}$ as $l→∞$ for some $Π^{(∞)} ∈ 𝒫_𝒞(Γ,ϒ)$.
Each subsequence of $\{M^{(i)}\}_{i ≥ 0}$ has a further subsequence $\{M^{(l)}\}_{l ≥ 1}$ which converges in law: $M^{(l)} \overset{ℒ}{⟶} M^{(∞)}$ for some $M^{(∞)} ∈ 𝒫_𝒞(Γ,ϒ)$.
By the lower semi-continuity of the KL divergence with respect to the topology of weak convergence \citep[Theorem 19]{vanerven2014renyi}, $0 = \liminf_{l→∞} D_{KL}(Π^{(l)} \TO M^{{i}}) ≥ D_{KL}(Π^{(∞)} \TO M^{{∞}})$ and therefore $Π^{(∞)} = M^{(∞)}$.
Denote this common PM with $S^{(∞)}$.

Due to the convergence in law of $Π^{(l)}$, and the form of its disintegration, $S^{(∞)}$ is of the form $S^{(∞)}_{0,τ}R_{•|0,τ}$ for some $S^{(∞)}_{0,τ} ∈ 𝒫_2(Γ,ϒ)$.
If $S^{(∞)}$ is also diffusion, it follows that $S^{(∞)}_{0,τ} = S^{(*)}_{0,τ}$ (\cref{sec:reciprocal}) and thus $S^{(∞)} = S^*$.
It remains to show that $S^{(∞)}$ is a diffusion when $σ_R(x,t) = I$.
Again by the lower semi-continuity of the KL divergence, we have that $D_{KL}(S^{(∞)} \TO S^*) < ∞$ and thus $S^{(∞)} ≪ S^*$.
But $S^* ≪ R$ and $R ≪ R^{\circ}$ by assumption where $R^{\circ}$ is the $d$-dimensional Brownian measure on $[0,τ]$.
Then, by \citet[Theorem 7.11 and following Note 1]{liptser1977statistics}, $S^{(∞)}$ is a diffusion.

As convergence in law to $S^*$ has been established for arbitrary convergent subsequences, $\{Π^{(i)}\} \overset{ℒ}{⟶} S^{(∞)}$ and $\{M^{(i)}\} \overset{ℒ}{⟶} S^{(∞)}$ by \citet[Theorem 2.6]{billingsley1999convergence}.
\end{proof}

\section{Additional SDE Class Formulae}\label{app:sde_class_extra}

The linearity of \cref{eq:sde_class_x_score_s,eq:sde_class_x_score_t} stands behind the form of the conditional expectations entering the drift coefficients that follow.
The generative time reversal process \cref{eq:sde_time_reversal} is given by
\begin{equation*}\begin{aligned}
 & d\RX_t = \Big[αβ_𝔯\RX_t + β_𝔯 \frac{𝔼_R[\RX_τ|\RX_t]a(0,b_𝔯) - \RX_t}{v(0,b_𝔯)}\Big]dt + \sqrt{β_𝔯}Γ^{1/2}dW_t,\quad t ∈ [0,τ], \\
 & \RX_0 ∼ R_τ ≈ ϒ,
\end{aligned}\end{equation*}
the DBM transport \cref{eq:dbm_fwd} is given by
\begin{equation*}\begin{aligned}
 & dX_t = \Big[-αβ_t X_t + β_t \frac{𝔼_{Π}[X_τ|X_t]a(b_t,b_τ) - X_t a(b_t,b_τ)^2}{v(b_t,b_τ)}\Big]dt + \sqrt{β_t}Γ^{1/2}dW_t,\quad t ∈ [0,τ], \\
 & X_0 ∼ Γ,
\end{aligned}\end{equation*}
while the BDBM transport \cref{eq:dbm_bwd} is given by
\begin{equation*}\begin{aligned}
 & d\RX_t = \Big[αβ_t \RX_t + β_𝔯 \frac{𝔼_{Π}[\RX_τ|\RX_t]a(0,b_𝔯) - \RX_t}{v(0,b_𝔯)} \Big]dt + \sqrt{β_𝔯}Γ^{1/2}dW_t,\quad t ∈ [0,τ], \\
 & \RX_0 ∼ ϒ.
\end{aligned}\end{equation*}
For $α=0$, the DBM and BDBM transports are symmetric in the following sense: the BDM transport based on $C ∈ 𝒫_2(Γ,ϒ)$ and $β_t$ is equivalent in law to the BDBM transport based on $\RC ∈ 𝒫_2(ϒ,Γ)$ and $\overline{β}_t ≔ β_𝔯$.

\section{Generative Modeling Code}\label{app:score_generative}

\begin{listing}
\begin{minted}[linenos,numbersep=5pt,fontsize=\scriptsize,mathescape,frame=lines,framesep=2mm,highlightlines={15-16},highlightcolor=easy_red_bg]{python}
# requires:
#     b_t(t): $b_t(t)$
#     a_s_t(s, t): $a(s, t)$ from (29)
#     v_s_t(s, t): $v(s, t)$ from (29)
# inputs:
#     x_0: [B,C,H,W]
#     model: ([B,C,H,W], [B]) -> [B,C,H,W]
# outputs:
#     losses: [B]
def sgm_loss(x_0, model, T=1.0):
    t = torch.rand(x_0.shape[0], device=x_0.device) * T                              # [B]
    scaled_t = b_t(t)                                                                # [B]
    z = torch.randn_like(x_0)                                                        # [B,C,H,W]
    a_0_t, v_0_t = a_s_t(0.0, scaled_t), v_s_t(0.0, scaled_t)                        # [B]
    s_0_t = torch.sqrt(v_0_t)                                                        # [B]
    x_t = a_0_t[:, None, None, None] * x_0 + s_0_t[:, None, None, None] * z          # [B,C,H,W]
    score_t = (x_0 * a_0_t[:, None, None, None] - x_t) / v_0_t[:, None, None, None]  # [B,C,H,W]
    losses = (model(x_t, scaled_t) - score_t)**2 * v_0_t[:, None, None, None]        # [B,C,H,W]
    losses = torch.sum(losses.view(losses.shape[0], -1), dim=1)                      # [B]
    return losses
\end{minted}
\caption{SGM loss sampling; \texttt{x\_0} is a batch of \texttt{B} images, each of which has \texttt{C} channels, height \texttt{H} and width \texttt{W}.}\label{code:sgm}
\end{listing}

\begin{listing}
\begin{minted}[linenos,numbersep=5pt,fontsize=\scriptsize,mathescape,frame=lines,framesep=2mm,highlightlines={18-26},highlightcolor=easy_blue_bg]{python}
# requires:
#     b_t(t): $b_t(t)$
#     a_s_t(s, t): $a(s, t)$ from (29)
#     v_s_t(s, t): $v(s, t)$ from (29)
#     al_s_t_u(s, t, u): $\hat{a}(s,t,u)$ from (32)
#     ar_s_t_u(s, t, u): $\check{a}(s,t,u)$ from (32)
#     vb_s_t_u(s, t, u): $\tilde{v}(s,t,u)$ from (32)
# inputs:
#     x_0: [B,C,H,W]
#     model: ([B,C,H,W], [B]) -> [B,C,H,W]
# outputs:
#     losses: [B]
def bdbm_loss(x_0, model, T=1.0, sigma_T=1.0):
    t = torch.rand(x_0.shape[0], device=x_0.device) * T                              # [B]
    scaled_t, scaled_T = b_t(t), b_t(T)                                              # [B]
    z = torch.randn_like(x_0)                                                        # [B,C,H,W]
    a_0_t, v_0_t = a_s_t(0.0, scaled_t), v_s_t(0.0, scaled_t)                        # [B]
    x_T = torch.randn_like(x_0) * sigma_T                                            # [B,C,H,W]
    al_0_t_T,  ar_0_t_T, = al_s_t_u(0.0, t, scaled_T), ar_s_t_u(0.0, t, scaled_T)    # [B]
    vb_0_t_T = vb_s_t_u(0.0, t, scaled_T)                                            # [B]
    sb_0_t_T = torch.sqrt(vb_0_t_T)                                                  # [B]
    x_t = (
        + al_0_t_T[:, None, None, None] * x_0
        + ar_0_t_T[:, None, None, None] * x_T
        + sb_0_t_T[:, None, None, None] * z
    )                                                                                # [B,C,H,W]
    score_t = (x_0 * a_0_t[:, None, None, None] - x_t) / v_0_t[:, None, None, None]  # [B,C,H,W]
    losses = (model(x_t, scaled_t) - score_t)**2 * v_0_t[:, None, None, None]        # [B,C,H,W]
    losses = torch.sum(losses.view(losses.shape[0], -1), dim=1)                      # [B]
    return losses
\end{minted}
\caption{BDBM loss sampling; \texttt{x\_0} is a batch of \texttt{B} images, each of which has \texttt{C} channels, height \texttt{H} and width \texttt{W}.}\label{code:bdbm}
\end{listing}

\section{Additional Experimental Results}\label{app:extra}

\begin{figure}[H]
\centering
\includegraphics[width=0.75\linewidth]{figure/idbm-grid-bwd.pdf}
\caption{Random paths $X_t$ and corresponding terminal value estimators $E_t$ for fully trained BDBM and IDBM procedures corresponding to different regularization levels; backward time direction; $t$ uniformly spaced on $[0,1]$ for $X_t$, $[0, 1 - Δt]$  for $E_t$; $X_0 ∼ D_\mathrm{EMNIST}^\mathrm{test}$ is fixed.}\label{fig:idbm_grid_bwd}
\end{figure}

\begin{figure}[H]
\centering
\includegraphics[width=0.75\linewidth]{figure/idbm-grid-fwd.pdf}
\caption{Random paths $X_t$ and corresponding terminal value estimators $E_t$ for fully trained IDBM procedures corresponding to different regularization levels; forward time direction; $t$ uniformly spaced on $[0,1]$ for $X_t$, $[0, 1 - Δt]$ for $E_t$; $X_0 ∼ D_\mathrm{MNIST}^\mathrm{test}$ is fixed.}\label{fig:idbm_grid_fwd}
\end{figure}

\begin{figure}[H]
\centering
\includegraphics[width=0.75\linewidth]{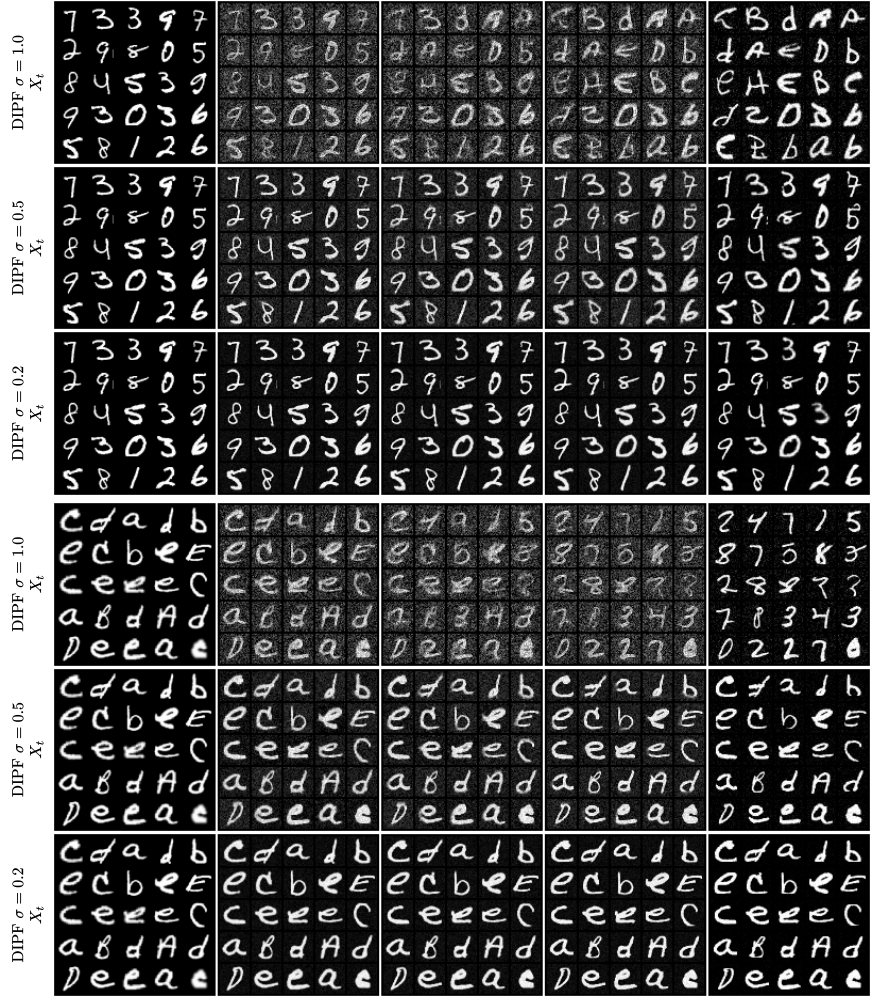}
\caption{Random paths $X_t$ for fully trained DIPF procedures corresponding to different regularization levels; forward (first three rows) and backward (last three rows) time directions; $t$ uniformly spaced on $[0,1]$; $X_0 ∼ D_\mathrm{MNIST}^\mathrm{test}$ (forward time) and $X_0 ∼ D_\mathrm{EMNIST}^\mathrm{test}$ (backward time) are fixed.}\label{fig:dipf_grid}
\end{figure}

\section{Notation}\label{app:notation}

\begin{table}[H]
\centering
\resizebox{0.95\linewidth}{!}{
\begin{tabular}{ll}
\toprule
Notation                         & Description                                                                        \\
\midrule
$[0,τ]$                          & Time interval                                                                      \\
$d$                              & Dimension of state space                                                           \\
$P$, $Q$, $…$                    & Probability measure (PM)                                                           \\
$𝒫_𝒞$                            & PMs over $𝒞([0,τ],ℝ^d)$                                                            \\
$𝒫_n$                            & PMs over $ℝ^{d × n}$                                                               \\
$𝒫_𝒞(Γ,ϒ)$                       & PMs with initial-terminal distributions $Γ,ϒ$; $𝒫_𝒞(Γ,ϒ) ⊆ 𝒫_𝒞$                    \\
$𝒫_2(Γ,ϒ)$                       & PMs with marginal distributions $Γ,ϒ$; $𝒫_2(Γ,ϒ) ⊆ 𝒫_2$                            \\
$𝒰(0,τ)$                         & Uniform distribution on $(0,τ)$                                                    \\
$𝒩_d(μ,Σ)$                       & $d$-variate normal distribution with mean $μ$ and covariance $Σ$                   \\
$dP/dQ$                          & Density of PM $P$ with respect to PM $Q$                                           \\
$p$                              & Lebesgue density of PM $P$                                                         \\
$P_{t_1,…,t_n}$                  & Marginalization of $P ∈ 𝒫_𝒞$ at $t_1,…,t_n$                                        \\
$P_{•|t_1,…,t_n}$                & Conditioning of $P ∈ 𝒫_𝒞$ given values at $t_1,…,t_n$                              \\
$P=P_{t_1,…,t_n}P_{•|t_1,…,t_n}$ & Marginal-conditional decomposition of $P ∈ 𝒫_𝒞$                                    \\
$Q=ΨP_{•|t_1,…,t_n}$             & Mixing of $P ∈ 𝒫_𝒞$ via $Ψ ∈ 𝒫_n$ at times $t_1,…,t_n$; $Q ∈ 𝒫_𝒞$                  \\
$D_{KL}(S \TO R)$                & Kullback-Leibler divergence from PM $S$ to PM $R$                                  \\
$I$                              & Identity matrix                                                                    \\
$Aᵀ$                             & Transposition of a square matrix $A$                                               \\
$‖V‖$                            & Euclidean norm of a vector $V$                                                     \\
$\RX$                            & Time reversal of stochastic process $X$                                            \\
$\RP$                            & Time reversal of PM $P$; $P ∈ 𝒫_𝒞$                                                 \\
$𝔯$                              & Reverse timescale; $𝔯 ≔ τ - t$                                                     \\
$W$                              & $d$-dimensional standard Brownian motion                                           \\
\midrule
$Γ$, $ϒ$                         & Target PMs; $Γ,ϒ ∈ 𝒫_1$                                                            \\
$R$                              & Reference PM;\@ $R ∈ 𝒫_𝒞$                                                          \\
$μ_R(x,t)$, $σ_R(x,t)$           & Reference SDE drift and diffusion coefficients                                     \\
$Σ_R(x,t)$                       & Reference SDE covariance coefficient; $Σ_R ≔ σ_Rσ_Rᵀ$                              \\
$S^*(Γ,ϒ,R,𝒫_𝒞)$                 & Solution $\DynSB$ (dynamic)                                                        \\
$S^*(Γ,ϒ,R_{0,τ},𝒫_2)$           & Solution to $\StaSB$ (static) (corresponding to $\DynSB$)                          \\
$S^*(Γ,□,Q,𝒫)$                   & Solution to forward $\DynHSB$ ($𝒫=𝒫_𝒞$), $\StaHSB$ ($𝒫=𝒫_2$); $Q∈𝒫$                \\
$S^*(□,ϒ,Q,𝒫)$                   & Solution to backward $\DynHSB$ ($𝒫=𝒫_𝒞$), $\StaHSB$ ($𝒫=𝒫_2$); $Q∈𝒫$               \\
$F^{(i)}$                        & $i$-th IPF iteration; $F^{(i)} ∈ 𝒫_𝒞 ∪ 𝒫_2$, $i ≥ 0$                               \\
$μ_{F^{(i)}}$                    & Drift corresponding to $F^{(i)} ∈ 𝒫_C$                                             \\
$Π(C,R_{•|0,τ})$                 & Diffusion mixture; $C ∈ 𝒫_2(Γ,ϒ)$, $Π ∈ 𝒫_𝒞(Γ,ϒ)$                                  \\
$M(Π(C,R_{•|0,τ}))$              & Forward DBM based on $C$; $C ∈ 𝒫_2(Γ,ϒ)$, $M ∈ 𝒫_𝒞(Γ,ϒ)$                           \\
$\RM(Π(C,R_{•|0,τ}))$            & Backward DBM based on $C$; $C ∈ 𝒫_2(Γ,ϒ)$, $\RM ∈ 𝒫_𝒞(ϒ,Γ)$                        \\
$M^{(i)}$                        & $i$-th (forward) IDBM iteration; $M^{(i)} ∈ 𝒫_𝒞$, $i ≥ 0$                          \\
$C^{(i)}$                        & $i$-th IDBM coupling; $C^{(i)} ∈ 𝒫_2(Γ,ϒ)$, $C^{(i)} ≔ M^{(i)}_{0,τ}$              \\
$Π^{(i)}$                        & $i$-th IDBM diffusion mixture; $Π^{(i)} ∈ 𝒫_𝒞$, $Π^{(i)} ≔ Π(C^{(i-1)},R_{•|0,τ})$ \\
$μ_{M^{(i)}}$                    & Drift corresponding to $M^{(i)} ∈ 𝒫_𝒞$                                             \\
\bottomrule
\end{tabular}
}
\caption{Main notation summary.}\label{tab:symbols}
\end{table}

\end{appendices}

\clearpage
\bibliography{references}

\end{document}